\documentclass{article} 
\usepackage[table]{xcolor}
\usepackage{iclr2025_conference,times}
\usepackage{amsthm, algorithm, algorithmic}
\usepackage{subcaption}
\usepackage{multirow}
\usepackage{wrapfig}
\usepackage{makecell}


\usepackage{amsmath,amsfonts,bm, bbm}


\newtheorem{prop}{Proposition}
\newtheorem{corollary}{Corollary}


\newcommand{\loss}{\ell}
\newcommand{\CE}{\mathbb{CE}}







\def\eqref#1{equation~\ref{#1}}









\def\1{\bm{1}}

\def\eps{{\epsilon}}










\DeclareMathAlphabet{\mathsfit}{\encodingdefault}{\sfdefault}{m}{sl}
\SetMathAlphabet{\mathsfit}{bold}{\encodingdefault}{\sfdefault}{bx}{n}

\def\gA{{\mathcal{A}}}

\def\gD{{\mathcal{D}}}

\def\gG{{\mathcal{G}}}
\def\gH{{\mathcal{H}}}
\def\gI{{\mathcal{I}}}

\def\gN{{\mathcal{N}}}

\def\gW{{\mathcal{W}}}
\def\gX{{\mathcal{X}}}



\def\sI{{\mathbb{I}}}








\newcommand{\E}{\mathbb{E}}

\newcommand{\R}{\mathbb{R}}



\DeclareMathOperator*{\argmin}{arg\,min}

\usepackage{hyperref}
\usepackage{url}

\usepackage[textsize=tiny,colorinlistoftodos, shadow,color=blue!30!white, 
disable
]{todonotes}
\usepackage{xspace}

\newcommand{\todoa}[2][]{\todo[size=\scriptsize,color=green!20!white,#1]{{\it A:~}#2}\xspace}

\newcommand{\revision}[1]{\textcolor{black}{#1}\xspace}

\iclrfinalcopy

\title{Toward Understanding In-context vs.\ In-weight Learning}


\author{
Bryan Chan$^{1*}$, Xinyi Chen$^{2*}$, András György$^{2}$, Dale Schuurmans$^{1, 2}$ \\
${^1}$University of Alberta \quad ${^2}$Google DeepMind\\
\texttt{bryan.chan@ualberta.ca} \quad
\texttt{\{xinyic,agyorgy,schuurmans\}@google.com}
\vspace{-3mm}
}

%

\begin{document}

\maketitle

\def\thefootnote{*}\footnotetext{
Equal contribution.
}\def\thefootnote{\arabic{footnote}}

\vspace{-2mm}

\begin{abstract}
It has recently been demonstrated empirically that in-context learning  emerges in transformers when certain distributional properties are present in the training data, but this ability can also diminish upon further training.
We provide a new theoretical understanding of these phenomena by identifying simplified distributional properties that give rise to the emergence and eventual disappearance of in-context learning.
We do so by first analyzing a simplified 
model that uses a gating mechanism to choose between an in-weight and an in-context predictor.
Through a combination of a generalization error and regret analysis we identify conditions where in-context and in-weight learning emerge.
These theoretical findings are then corroborated experimentally by comparing the behaviour of a full transformer on the simplified distributions to that of the stylized model, demonstrating aligned results.
We then extend the study to a full large language model,
showing how fine-tuning on various collections of natural language prompts can elicit similar in-context and in-weight learning behaviour.
\end{abstract}

\section{Introduction}
\vspace{-2mm}

In-context learning (ICL) is an interesting and useful property exhibited by large language models (LLMs), wherein a model is able to successfully learn and generalize from new information given in its input, without ever having seen related data during training.
\citet{Radford2019} and \citet{brown2020} were among the first to demonstrate the in-context learning capability of large-language models.
Since then, numerous empirical studies have attempted to understand the emergence of ICL in LLMs \citep{olsson2022,wei2023,agarwal2024manyshot,shi2024why}.
However, ICL is not limited to natural language processing (NLP), and many synthetic settings have been constructed to better understand this phenomenon \citep{garg2022what,akyurek2023,vonoswaldetal2023,gupta2023context,edelman2024evolutionstatisticalinductionheads,wu2024how,zhang2024a}.

A recent line of work \citep{chan2022data,reddy2024mechanistic, singh2024transient} has explored the emergence of ICL through the lens of data-distributional properties. 
In particular, \citet{chan2022data} first proposed the importance of common and rare classes as driving in-context learning and in-weight learning (IWL).
Both \citet{chan2022data} and \citet{reddy2024mechanistic} empirically identified that ICL can emerge in a transformer model \citep{vaswani2017aiayn} when there is a large within-class variation and a large number of classes.
\citet{chan2022data,chan2022} further observed that both ICL and IWL can emerge simultaneously when the distribution over classes is Zipfian, while \citet{singh2024transient,reddy2024mechanistic,panwar2024context} subsequently noticed that ICL can become transient in an asymptotic training regime.

The work in this paper is inspired by a similar distributional perspective, but we investigate in greater depth how the distributional properties of the data affect the ability of a transformer model to learn and implement in-weight (IW) and in-context (IC) predictors over different parts of the input space.
In particular, we provide insights on the emergence of ICL in synthetic settings with theory and experiments, and bridge the gap to practice by demonstrating that similar phenomena continue to hold in a simple NLP task with a real LLM, Gemini Nano 1 \citep{geminiteam2023gemini}.

\vspace{-2mm}
 
\paragraph{Contributions.} 
We present a simple theoretical model in which ICL both emerges and becomes transient in the infinite data regime, demonstrating that very simple properties of the training distribution can lead to these phenomena.
In particular, following the common vs.\ rare classes distribution of \citet{chan2022data}, our model requires two types of data: (i) data where similar observations frequently appear in the training set---this type of data will induce IWL; and (ii) a large number of diverse data points, where each sample is predictable from the context, but each appears rarely so that reliable IWL cannot be achieved---this type of data will induce ICL.

Through a combination of a generalization error and regret bound analysis, we identify conditions where ICL and IWL emerge. In particular, our theoretical model shows that the presence of type (i) data (which we refer to as \emph{in-weight} data) induces IWL and the presence of type (ii) data (referred to as \emph{in-context} data) leads to ICL. This is consistent with the empirical findings of \citet{chan2022data}:
When a sufficient number of in-context data samples are present, such as samples from rare classes, ICL emerges, and the Zipf distribution provides a good balance between in-context and in-weight data.
When the model is trained over more samples, the samples from rare classes (or in-context data) accumulate and the model learns in-weight after observing sufficiently many samples from a given class.
\revision{
That is, the in-context data eventually becomes in-weight data, which improves the performance of IWL.
Consequently if IW prediction is better asymptotically, IWL is eventually preferred over ICL and the latter diminishes.
This partially explains the recent findings that ICL can be transient \citep{reddy2024mechanistic,panwar2024context}, leaving open only the question about the case, examined by \citet{singh2024transient}, where ICL and IWL can achieve the same performance in theory.
}

We demonstrate empirically that training transformers on synthetic and Omniglot \citep{lake2015omniglot} data drawn from the stylized distribution in our theoretical model follows the predictions of the developed theory.
\revision{
We further provide examples where ICL is persistent or where ICL and IWL are present in parallel when ICL and IWL can achieve the same performance, which suggests that the transience of ICL is more complicated in this case and it might depend on its finite-time performance (which is much harder to analyze experimentally).
}
Finally, to bridge the gap from theory to practice, we show that \revision{fine-tuning an LLM \citep[Gemini Nano v1][]{geminiteam2023gemini}} to memorize certain data can result in a reduction of its ICL ability.

\vspace{-2mm}

\paragraph{Additional related work.}
We do not investigate how transformers learn circuits that can perform ICL.
While such ability can evidently emerge from a vast amount of in-context training data and a sufficiently expressive architecture (function class), several works have investigated mechanistically how transformers can realize ICL \citep{garg2022what,olsson2022,xie2022explanation,akyurek2023,hendel2023,abernethy2024,singh2024what}.
In particular,
\citet{garg2022what} have demonstrated transformers can in-context learn simple regression tasks and \citet{akyurek2023} mathematically constructed instances in which transformers can implement gradient descent and closed-form regression.
\citet{xie2022explanation} cast ICL as implicit Bayesian inference and have shown that ICL is optimal when provided infinite examples with distinguishable prompts.
\citet{hendel2023} have empirically demonstrated that the transformer can compress context into a task vector and direct the output of the query. \revision{These works generally induce ICL by constructing multi-task context sequences.
By contrast, we investigate how ICL can emerge with context sequences of only a single task. While we consider the competition of ICL and IWL capabilities of the model, \citet{Lin2024dual} examines how the model chooses from different ICL predictors learnt during training in a linear regression setting.
}

\revision{
Several works studied ICL from a theoretical perspective.
\citet{bai2023} show that transformers can implement many machine learning algorithms in-context
and can perform in-context algorithm selection.
They also provide excess loss guarantees for pretraining to perform ICL. 
\citet{li2023} formalize ICL as an algorithm learning problem, where during pretraining the transformer learns a prediction function from the training sequences.
They further provide generalization bounds from a multi-task learning perspective using algorithmic stability. 
Several works study the training dynamics with gradient descent or gradient flow and the emergence of ICL during training. \citet{nichani2024how} study ICL by focusing on the ability of a simplified two-layer transformer architecture to learn causal structures and the emergence of induction heads. 
\citet{edelman2024evolutionstatisticalinductionheads,chen2024unveiling} also study how two-layer transformers learn induction heads for Markovian input sequences, unveiling the multi-phase nature of the learning process.
Focusing on single-layer architectures, 
\citet{sanford2024} show that the size of a single-layer transformer should be exponentially larger to learn similar induction heads than that of a two-layer architecture, while \citet{zhang2024a} analyze the convergence properties of the training of a single linear attention layer for in-context linear regression tasks.
\cite{wu2024how} further study the generalization error under a similar setting, and characterize how many tasks are needed to pretrain the model. They also show that the learned solution is competitive with the Bayes optimal predictor under certain conditions. 
Our theoretical work differs in that we focus on the selection of an in-weight and an in-context predictor based on their corresponding test errors, as a model for this mechanism in transformer architectures.}

\vspace{-2mm}

\section{Theoretical analysis}
\label{sec:theory}

\vspace{-2mm}

In this section we present our theoretical model. In essence, we consider a bi-level model that learns a ``memorizing'' in-weight predictor $g$ as well as an in-context predictor $h$, and for every input $\tilde{x}$ it follows the prediction of the submodel that is expected to be better for that particular data point. Naturally, $g$ can be learned with high fidelity in parts of the input space where there is enough training data, while $h$ can be effective anywhere where the context is useful. The final behavior of the model then will depend on the expected accuracy of $g$ and $h$ for a given input. In the rest of this section we formalize this approach in the setting of tabular classification problems.

\vspace{-2mm}

\subsection{Setting}
\label{subsec:setting}
Let $\gX \subset \R^d$ be the finite space of inputs,\footnote{The finiteness of $\gX$ is only assumed to simplify the exposition; the results can easily be extended to a continuous input space $\gX$ \revision{(see Appendix~\ref{subsec:theory_extension})}.} $C$ be the number of classes, and let the space of labels be $\Delta_{C}$, the $C$-dimensional simplex. We have access to a training dataset $S$ of $N$ examples, where in addition to the usual (input, label) pairs in a classification problem, each example has a context consisting of $L$ extra (input, label) pairs:
$S = \{((x_i^1, y_i^1, x_i^2 ,y_i^2, \ldots, x_i^L, y_i^L, x_i), y_i)\}_{i=1}^N.
$
Let $\tilde{x}_i := (x_i^1, y_i^1, x_i^2 ,y_i^2, \ldots, x_i^L, y_i^L, x_i)$ denote the $i$-th training example, and $\tilde{\gX} = (\gX \times \Delta_C)^L \times \gX$ be the space of examples with context.  Suppose $(\tilde{x}_i, y_i)$ are drawn from a ground-truth distribution $\gD$; during training the task is to predict $y_i$ given $\tilde{x}_i$, while the final goal is to minimize the prediction error for a new sample $(\tilde{x},y)$ sampled independently from $\gD$ where $\tilde{x}=(x^1, y^1, x^2 ,y^2, \ldots, x^L, y^L, x)$.

In the latter, the prediction error is measured by the loss function $\ell: \Delta_C \times \Delta_C \rightarrow [0, 1]$, where $\ell(\hat{y},y)$ is the loss of the prediction $\hat{y}$ given the label $y$. 

\vspace{-2mm}

\paragraph{Data-generating process.} We consider the setting of inputs with label noise: there exists a mapping $y^*:\gX \rightarrow \Delta_C$, such that the true label for input $x$ is $y^*(x)$. When we sample $x$ in the context or as the query, we sample $y = e_i$, the $i$-th standard basis vector, with probability $y^*(x)_i$.

\vspace{-2mm}

\paragraph{Predictors.} Below, we formalize the two classes for learning: the in-weight learning (IWL) class $\gG$, and the in-context learning (ICL) class $\gH$.
The \emph{IWL class} of functions $\gG$ uses \textit{only the query}, and is a tabular function class: for each $x\in \gX$, $g$ can learn a separate mapping from $x$ to $\Delta_C$. We denote $\gG = \{g: \tilde{\gX}\rightarrow \Delta_C,  g(\tilde{x}; w) = g(x; w) = w(x) \in \Delta_C, w\in \Delta_C^{|\gX|}\}$.
The \emph{ICL class} of functions $\gH$ uses \textit{only labels in the context} to make a prediction, and let it be parameterized by $u$ belonging to some set $\gW$. Inspired by the induction head \citep{olsson2022}, a circuit that learns to copy tokens in the context to the output, we design $\gH$ consisting of functions that output a convex combination of the labels in the context. In addition, we mix in uniform distribution to ensure bounded loss. Let $h':\tilde{\gX}\times \gW\rightarrow \Delta_L$ be a function that outputs weights for a convex combination (a distribution) given an example, and let $\gH = \{h: \tilde{\gX}\rightarrow \Delta_C,  h(\tilde{x};u) = \eps\mathbf{1} + (1-C\eps)\sum_{l=1}^L h'(\tilde{x}; u)_l y_l, u\in \gW\}$.

\vspace{-2mm}

\paragraph{Simple model.} We design a model that selects among the in-weight and in-context learners by a function $\alpha$, which can take a different value in $[0, 1]$ for each $\tilde{x}\in \tilde{\gX}$.  We analyze the following simplified model $f$ to study the emergence of in-context vs.\ in-weight learning: for each $\tilde{x}$, 
$f(\tilde{x};\alpha, w, u) = \alpha(\tilde{x}) g(\tilde{x}; w) + (1-\alpha(\tilde{x}))h(\tilde{x}; u)$.

\vspace{-2mm}

\subsection{Selection by test error}
\label{sec:testerror}
Previous empirical observations show that ICL emerges when the data distribution has a long tail with many rare classes \citep{chan2022data}. We investigate the hypothesis that ICL emerges because in-weight learning has large error on classes that have few samples in the training dataset. We first quantify the generalization error of the in-weight learner. Then, we consider a more explicit class of in-context learners and give upper and lower bounds on the errors of the IC predictor. \revision{Because of its importance in training classifiers, and LLMs in particular, we concentrate on the cross-entropy loss\footnote{\revision{Throughout the paper, $\log$ denotes the natural logarithm.}} $\CE(\hat{y}, y) = -\sum_{c \in C}  y_c \log \hat{y}_c$; more general losses are considered in Appendix~\ref{sec:proofs}, and proofs are also relegated to this appendix.}


\revision{
We start by quantifying how fast the loss of an IW predictor $g$ can achieve that of the optimal predictor achieving $\min_{\hat{y}\in \Delta_C} \E_y[\ell(\hat{y},y)]$.
In case of the cross-entropy loss, the minimizer $\hat{y}$ above becomes the true distribution $y^*$ and we have
$\min_{\hat{y}\in \Delta_C} \E_y[\ell(\hat{y},y)] = \E_y[\ell(y^*(x),y)]$.
Furthermore, one can show that the convergence rate for the cross-entropy loss can be much faster than the usual $O(1/\sqrt{N_x})$ rate---the optimal rate achievable by any predictor $g$ is 
$\E_y[\CE(g(x),y)] - \E_y[\CE(y^*,y)] \ge \frac{C-1}{2} \frac{\log(N_x)}{N_x}-O(1/N_x)$.
Setting $g$ to be the famous Krichevsky–Trofimov estimator, predicting $g^{KT}(x)_i= \frac{N_{i|x}+1/2}{N_x + C/2}$ where $N_{i|x}$ is the number of samples when we observe label $i$ for input $x$, achieves this bound \citep[see, e.g.,][Theorem~6.4]{CsSh04}:
\begin{align*}
  \E_y[\CE(g^{KT}(x),y)] - \E_y[\CE(y^*,y)] \le \frac{C-1}{2} \frac{\log(N_x)}{N_x}+O(1/N_x).  
\end{align*}
\vspace*{-2mm}
}

Inspired by the induction head and attention mechanism in transformers \citep{olsson2022}, \revision{in the rest of this subsection} we consider the following function class for in-context learning, \revision{which is one of the simplest examples showcasing ICL capabilities and can be used to demonstrate basic properties that may emerge during ICL}:
$$
\gH = \left\{h(\tilde{x}) = \eps \mathbf{1} + (1-C\eps)\sum_{l=1}^L \frac{\exp(-\|x_l - x\|_A) }{\sum_{j=1}^L \exp(-\|x_j - x\|_A)}\ y_l, \|A\|\le B\right\},
$$

where \revision{$\epsilon \in (0, 1)$}, $A$ is a $d \times d$ positive semidefinite matrix, $\|A\|$ denotes the spectral norm of $A$, and $\|x\|_A = \sqrt{x^\top A x}$ for any $x \in \R^d$.
The idea is that a function $h\in\gH$ implements a simplified induction head, averaging the label predictions $y_l$ for all context vectors $x_l$ weighted by a softmax depending on the similarity of $x_l$ and  the query $x$.
If the context contains \emph{irrelevant} labels, that is, labels $y_l \neq y$ for some $l \in [L]$, the prediction will be noisy, as shown in the next proposition. 
\begin{prop} \label{prop:l1_norm_bound} Given an example sequence $\tilde{x}$ \revision{with one-hot label $y$, 
let $k$ denote} the number of irrelevant labels in the context. Suppose for all $x\in \gX$, $\|x\| \le 1$, then the prediction of any $h\in \gH$ satisfies 
$$\frac{2k(1-\eps C)}{k + (L-k)\exp(2\sqrt{B})} + 2\eps(C-1) \le \|h(\tilde{x}) - y\|_1 \le \frac{2k(1-\eps C)}{L} + 2\eps(C-1).
$$
\end{prop}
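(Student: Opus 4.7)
My plan is to rewrite $\|h(\tilde x)-y\|_1$ as a clean expression in the softmax mass that the induction head places on context labels that \emph{agree} with the true label, and then sandwich that mass via the uniform bound on $\|x_l-x\|_A$ implied by $\|A\|\le B$ together with $\|x\|,\|x_l\|\le 1$.

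First, the algebraic reduction. Write $y=e_c$, split the context into the relevant set $R=\{l:y_l=y\}$ and the irrelevant set $I=[L]\setminus R$ (so $|I|=k$), and let
$p_l:=\exp(-\|x_l-x\|_A)/\sum_j\exp(-\|x_j-x\|_A)$ and $\pi:=\sum_{l\in R}p_l$. Because each $y_l$ is one-hot, $\sum_{l\in R}p_l\,y_l=\pi\,e_c$, so reading off the $c$-th coordinate gives $h(\tilde x)_c=\eps+(1-C\eps)\pi$. Since $h(\tilde x)\in\Delta_C$ and $y$ is one-hot,
\[
\|h(\tilde x)-y\|_1 \;=\; 2\bigl(1-h(\tilde x)_c\bigr) \;=\; 2(1-C\eps)(1-\pi) + 2(C-1)\eps,
\]
so the whole proposition reduces to sandwiching $1-\pi$.

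Next, I would pin down the range of each exponential. Since $A\succeq 0$ with $\|A\|\le B$, $\|x_l-x\|_A^2\le B\|x_l-x\|^2\le 4B$, so every factor $\exp(-\|x_l-x\|_A)$ lies in $[e^{-2\sqrt B},1]$. Writing $1-\pi=S_I/(S_I+S_R)$ with $S_I:=\sum_{l\in I}\exp(-\|x_l-x\|_A)$ and $S_R:=\sum_{l\in R}\exp(-\|x_l-x\|_A)$, the lower bound follows by taking $S_I\ge k\,e^{-2\sqrt B}$ and $S_R\le L-k$, which yields $1-\pi\ge k/\bigl(k+(L-k)e^{2\sqrt B}\bigr)$; feeding this into the identity above gives the left-hand inequality of the proposition. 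The upper bound is the reverse extremisation, $S_I\le k$ and $S_R\ge (L-k)e^{-2\sqrt B}$, substituted in exactly the same way.

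Conceptually every step is elementary; the only care needed is (i) to check that the two extremal bounds on $S_I$ and $S_R$ can be applied \emph{simultaneously}, which is immediate since they involve disjoint sets of context positions, and (ii) to confirm the identity $h(\tilde x)_c=\eps+(1-C\eps)\pi$, which relies specifically on the one-hot structure of $y$ and the $y_l$'s. The one step I would scrutinise most carefully is the upper bound on $1-\pi$: the naive worst-case argument above produces a bound of the form $k/\bigl(k+(L-k)e^{-2\sqrt B}\bigr)$, and closing the gap to the cleaner form stated in the proposition is where I expect the main technical care to be needed (checking that this gap is not genuine, or else adjusting the denominator accordingly). Once $1-\pi$ is sandwiched, translating to $\|h(\tilde x)-y\|_1$ is just the affine substitution from the first step.
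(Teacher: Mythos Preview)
Your reduction and the sandwich via $\exp(-\|x_l-x\|_A)\in[e^{-2\sqrt B},1]$ are exactly the paper's argument; the paper writes the key identity as $\|h(\tilde x)-y\|_1=2\sum_{i\neq c}h(\tilde x)_i$ and then bounds the softmax mass on the irrelevant indices using monotonicity of $x\mapsto x/(x+y)$.

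Your suspicion about the upper bound is well placed. The paper does not close the gap you flag by any additional technique: it simply \emph{asserts} the equality $\sum_{i\notin\mathcal I}\exp(-\|x-x^i\|_A)=L-k$ (i.e.\ $S_R=L-k$ exactly), and then uses $S_I\le k$ together with monotonicity in $S_I$ to reach $k/L$. That equality would hold if every relevant context input satisfies $\|x_i-x\|_A=0$, but it is not derivable from the hypotheses stated in the proposition. Under only those hypotheses your honest extremisation yields $1-\pi\le k/\bigl(k+(L-k)e^{-2\sqrt B}\bigr)$, and the claimed bound $k/L$ can fail (take $L=2$, $k=1$, the irrelevant context input equal to $x$, and the relevant one at maximal $\|\cdot\|_A$-distance). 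So the gap you anticipate is genuine at the level of the stated hypotheses; the paper's proof rests on that unjustified equality rather than on any further idea you are missing.
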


Specializing $\ell(\hat{y},y)$ to cross-entropy loss $\CE(\hat{y}, y) = \sum_{c \in C} - y_c \log \hat{y}_c$, we obtain the following.

\begin{corollary}\label{cor:ce_lower_bound}
Assume the labels $y$ are one-hot (deterministic). Let $\tilde{x}$ be an example sequence. If $\tilde{x}$ does not contain a relevant label, then $\CE(h(\tilde{x}),y) = \log \frac{1}{\eps}$. If $\tilde{x}$ contains $k$ irrelevant labels, then 
$$
\frac{k(1-\eps C)}{k + (L-k)\exp(2\sqrt{B})} + \eps(C-1) \le \CE(h(\tilde{x}), y) \le -\log \left((1-\eps C)\frac{L-k}{L} + 
    \eps \right).
$$
\end{corollary}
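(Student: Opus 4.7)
\textbf{Proof proposal for Corollary~\ref{cor:ce_lower_bound}.} The plan is to reduce the cross-entropy to a single coordinate of $h(\tilde{x})$, then translate the $L^1$ bound of Proposition~\ref{prop:l1_norm_bound} into a bound on that coordinate. Since $y$ is one-hot, write $y = e_{c^*}$ for some $c^* \in [C]$. Then
\[
\CE(h(\tilde{x}), y) \;=\; -\sum_{c \in [C]} y_c \log h(\tilde{x})_c \;=\; -\log h(\tilde{x})_{c^*},
\]
so the whole statement is equivalent to matching upper/lower bounds on $h(\tilde{x})_{c^*}$.

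For the degenerate case where $\tilde{x}$ contains no relevant label (i.e., $y_l \neq y$ for all $l$, so $(y_l)_{c^*} = 0$), I would plug directly into the definition of $h$:
\[
h(\tilde{x})_{c^*} \;=\; \eps + (1 - C\eps) \sum_{l=1}^L h'(\tilde{x};u)_l (y_l)_{c^*} \;=\; \eps,
\]
giving $\CE(h(\tilde{x}),y) = \log(1/\eps)$ immediately. For the general case with $k$ irrelevant labels, the key observation is that because both $h(\tilde{x})$ and $y$ lie on the simplex and $y = e_{c^*}$, one has
\[
\|h(\tilde{x}) - y\|_1 \;=\; (1 - h(\tilde{x})_{c^*}) + \sum_{c \neq c^*} h(\tilde{x})_c \;=\; 2\bigl(1 - h(\tilde{x})_{c^*}\bigr).
\]
Plugging this into Proposition~\ref{prop:l1_norm_bound} and dividing by $2$ yields
\[
\tfrac{k(1-\eps C)}{k + (L-k)\exp(2\sqrt{B})} + \eps(C-1) \;\le\; 1 - h(\tilde{x})_{c^*} \;\le\; \tfrac{k(1-\eps C)}{L} + \eps(C-1).
\]

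The upper bound on $\CE$ then follows from the upper bound above by a direct algebraic rearrangement: $1 - \tfrac{k(1-\eps C)}{L} - \eps(C-1) = \tfrac{L-k}{L}(1-\eps C) + \eps$, so
\[
\CE(h(\tilde{x}), y) = -\log h(\tilde{x})_{c^*} \le -\log\!\left((1-\eps C)\tfrac{L-k}{L} + \eps\right).
\]
For the lower bound, let $a := \tfrac{k(1-\eps C)}{k+(L-k)\exp(2\sqrt{B})} + \eps(C-1) \in [0,1]$, so $h(\tilde{x})_{c^*} \le 1-a$. Using the elementary inequality $-\log(1-a) \ge a$ (valid for $a \in [0,1)$) I obtain $\CE(h(\tilde{x}),y) \ge -\log(1-a) \ge a$, which is exactly the claimed lower bound.

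I do not anticipate a real obstacle: given Proposition~\ref{prop:l1_norm_bound}, the entire argument is essentially the simplex identity $\|p - e_{c^*}\|_1 = 2(1 - p_{c^*})$ and the inequality $-\log(1-a) \ge a$. The only place that requires a bit of care is verifying that $a \in [0,1)$ so that the logarithm is well-defined; this follows from the fact that $h(\tilde{x})$ is a probability vector, so $h(\tilde{x})_{c^*} > 0$ (strictly, because of the $\eps\mathbf{1}$ smoothing term), and in particular $1 - h(\tilde{x})_{c^*} < 1$.
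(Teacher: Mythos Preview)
Your proof is correct and follows essentially the same route as the paper: reduce $\CE$ to $-\log h(\tilde{x})_{c^*}$ via the one-hot assumption, use the simplex identity $\|h(\tilde{x})-e_{c^*}\|_1 = 2(1-h(\tilde{x})_{c^*})$ together with Proposition~\ref{prop:l1_norm_bound} to bound $h(\tilde{x})_{c^*}$, and then apply $-\log(1-a)\ge a$ for the lower bound and a direct algebraic rearrangement for the upper bound. The paper phrases the lower-bound step directly in terms of $\sum_{i\neq c^*} h(\tilde{x})_i$ rather than the $L^1$ norm, but this is only a cosmetic difference.
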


\begin{figure}[t]
    \centering
    \includegraphics[width=0.88\textwidth]{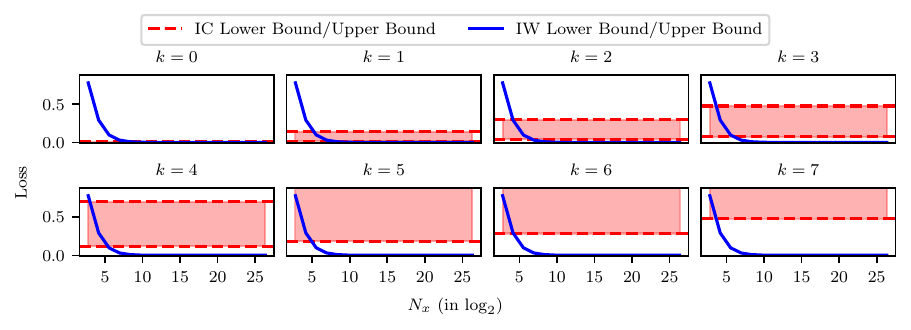}
    \vspace{-3mm}
    \caption{
        The theoretical error bounds of IC and IW predictors.
        We set $L = 8$, $\eps = 0.001$, $B = 1$, $C = 10$, and $y^*(x) = [0.999, 0.001/9, \dots, 0.001/9]$.
        As the number of irrelevant contexts, $k$, increases the lower and upper bounds of IC error also increase, whereas where the number of samples, $N_x$, increases the lower and upper bounds of the IW error decrease.
        Consequently one can expect ICL to be transient as we observe more samples.
    }
    \label{fig:toy_example_errors}
    \vspace{-2mm}
\end{figure}

The guarantees above show that the test error of the IW predictor converges to that of the optimal predictor at a rate of $O(1/\sqrt{N_x})$, while the IC predictor has a minimum test error depending on the number of irrelevant labels. In the case where the true labels $y^*(x)$ have low variance, and the best IW predictor achieves lower risk than the IC predictor, the IW predictor will eventually be more accurate than the IC predictor given enough samples. \revision{
For example, let $B = 1$, $C=2$.
Fix $L$, for any $ k \ge 1$, the minimum ICL error is bounded from below by $\frac{1-2\eps}{1+ (L-1)e^2} + \eps$.
Under the cross-entropy loss, the minimum error for the IW predictor is the entropy of $y^*(x)$.
Since $C=2$, $y$ has a binary distribution, whose entropy is a continuous function with minimum value 0.
Therefore, if $y^*$ concentrates sufficiently on one of the possible outputs, the entropy is smaller than $\frac{1-2\eps}{1+ (L-1)e^2} + \eps$, and under this $y^*(x)$ the minimum error of IWL is smaller than that of the ICL predictor.
}

We plot the bounds in Figure \ref{fig:toy_example_errors}. Indeed, with few samples, the loss of the IC predictor can be smaller than that of the IW predictor. However, the IW predictor eventually converges to a solution with smaller error after seeing many examples with the same query. We can consider an oracle algorithm that selects between the learned IW predictor $\hat{g}$ and IC predictor $\hat{h}$ based on their test error (i.e., how they would perform on new data), and predicts
\[
f(\tilde{x}) = \alpha(\tilde{x}) \hat{g}(x) + (1-\alpha(\tilde{x})) \hat{h}(\tilde{x}), \quad\text{ where } \quad
\alpha(\tilde{x}) = \sI\left\{\E_y[\ell(\hat{g}(x),y)] \le \E_y[\ell(\hat{h}(\tilde{x}),y)]\right\}
\]
is the indicator function whether the expected test error of $\hat{g}$ does not  exceed that of $\hat{h}$ for a particular input $\tilde{x}$.
\revision{
Under this algorithm, it is possible that initially IWL performs worse than ICL due to insufficient in-weight data. For example, when the training data contains a large number of rare classes, and each class is seen only a few times. Eventually, with enough in-weight data, the model can memorize the solution to achieve as good or even better prediction accuracy by using IWL on the query, rather than using the context for prediction and exhibiting ICL.
}
This phenomenon is consistent with the empirical observation that ICL emerges with rare classes in the training data, and \revision{can be transient after more training steps with more data \citep{chan2022data,singh2024transient,reddy2024mechanistic,panwar2024context}.
We defer further discussion on transience of ICL to Appendix~\ref{sec:transient_icl}.
}

\subsection{Learning to use in-context vs. in-weight learning}
\label{subsec:select_ic_iw}
In practice, the model does not observe the test error during training.
Here we attempt to bridge the gap between the oracle algorithm and the behavior of the simple model when $f$ is trained using gradient-based methods.
Consider the usual sequential training procedure: for each sample $\tilde{x}_t, t=1,\ldots,N$, the model predicts $f_t(\tilde{x}_t; \alpha_t) = \alpha_t(\tilde{x}_t)g(\tilde{x}_t; w_t) + (1-\alpha_t(\tilde{x}_t))h(\tilde{x}_t; u_t)$, and the parameters $w_{t+1}$, $u_{t+1}$, $\alpha_{t+1}$ are updated based on the observed loss $\ell_t(f_t(\tilde{x}_t)) = \ell(f_t(\tilde{x}_t), y_t)$.

The parameters $w_t, u_t, \alpha_t$ are usually updated using some gradient-based method $\gA_f$. Apart from converging to a local optimum of the loss function under technical assumptions (e.g., convexity of the loss function, see, e.g., \citealp{Zin03, hazan2023introductiononlineconvexoptimization}), such algorithms often guarantee that their regret grows only sublinearly in $N$, where the regret of algorithm $\gA_f$ is defined as
\begin{align*}
\text{Regret}_N(\gA_f) = \sum_{t=1}^N \ell(f_t(\tilde{x}_t; \alpha_t),y_t) - \min_{w,u,\alpha}\sum_{t=1}^N \ell(f(\tilde{x}_t; w, u ,\alpha),y_t),
\end{align*}
the excess loss of the online updated parameter sequence $(w_t, u_t,\alpha_t)_{t=1}^N$ relative to that of the optimal predictor selected in hindsight (where $f(\cdot,w,u,\alpha)=\alpha(\cdot) g(\cdot;w) + (1-\alpha(\cdot))h(\cdot;u)$). 
Sublinear regret means that the average loss of the algorithm converges to the loss of the optimal predictor; algorithms with this property are called \emph{no-regret} algorithms (as the average regret converges to zero).
Note that when minimizing the regret, the parameters used to predict at time step $t$ are computed based on the observed data $(\tilde{x}_1,y_1),\ldots,(\tilde{x}_{t-1},y_{t-1})$, and are evaluated on a new data point $(\tilde{x}_t,y_t)$; hence the choice of $\alpha_t$ is evaluated on new data, similarly to our recommendation at the end of the previous subsection, where the choice between the IW and IC predictors should be based on their performance on new, unseen data, that is, the test error.

For simplicity, we consider
a bi-level parameter update procedure shown in Algorithm~\ref{alg:alpha}. Since $g$ is tabular, it can implement the optimal fixed predictor for each input $x$, and hence in the regret definition we can use $g(\cdot; w^*)$ (where $w^*$ is the optimal parameter for $g$). In the following proposition we show that the regret of Algorithm~\ref{alg:alpha} is bounded by the regret of learning $\alpha$ and $g$. If $\gA_\alpha$ and $\gA_g$ are no-regret algorithms, Algorithm~\ref{alg:alpha} has sublinear regret. 
Moreover, observe that under the linear losses $m_t$, the best-in-hindsight $\alpha$ chooses the predictor with the lower total loss for each $\tilde{x}$. Indeed, denote $\alpha^*$ to be the best $\alpha$ in hindsight given the losses $m_t$, it has the following explicit form:
$$\alpha^*(\tilde{x}) = 1\ \ \text{if }\sum_{t: \tilde{x}_t = \tilde{x}} \ell_t(g(\tilde{x}_t; w_t)) - \ell_t(h(\tilde{x}_t; u_t)) < 0,\quad  \alpha^*(\tilde{x}) = 0\ \  \text{otherwise}
$$
\vspace{-2mm}
\begin{algorithm}
	[H] \caption{Bi-level update} \label{alg:alpha}
	\begin{algorithmic}
		\STATE Input: horizon $N$, no-regret learner $\gA_\alpha$, $\gA_g$ for $g$, and $\gA_h$ for $h$.
		\FOR{$t=1$ to $N$}
		\STATE Receive example $\tilde{x}_t$, predict with $f_t(\tilde{x}_t; \alpha_t) = \alpha_t(\tilde{x}_t)g(\tilde{x}_t; w_t) + (1-\alpha_t(\tilde{x}_t))h(\tilde{x}_t; u_t)$
		\STATE Observe losses $\ell_t(f_t(\tilde{x}_t))$, $\ell_t(g(\tilde{x}_t; w_t))$, $\ell_t(h(\tilde{x}_t; u_t))$
		\STATE Update $w_{t+1} = \gA_g(\ell_1(g(\tilde{x}_t; w_1)), \ldots, \ell_t(g(\tilde{x}_t; w_t)))$
		\STATE Update $u_{t+1} = \gA_h(\ell_1(h(\tilde{x}_t; u_1)), \ldots, \ell_t(h(\tilde{x}_t; u_t)))$
		\STATE Define $m_t(\alpha_t) =\alpha_t(\tilde{x}_t)(\ell_t(g(\tilde{x}_t; w_t)) - \ell_t(h(\tilde{x}_t; u_t)))$
		\STATE Update $\alpha_{t+1} = \gA_\alpha(m_1, \ldots, m_t)$
		\ENDFOR
	\end{algorithmic}
\end{algorithm}
\vspace{-2mm}
\begin{prop} \label{prop:regret_two_level} Let $x_t$ denote the query at time $t$, let $\text{Regret}_N(\gA_g) = \sum_{t=1}^N\ell(g(x_t; w_t),y_t) - \sum_{t=1}^N \ell(g(x_t; w^*),y_t)$ be the regret of learning $g$, and $\text{Regret}_N(\gA_\alpha) = \sum_{t=1}^N m_t(\alpha_t) - m_t(\alpha^*)$ be the regret of learning $\alpha$.
Algorithm~\ref{alg:alpha} satisfies \begin{align*}
    \sum_{t=1}^N \ell(f_t(\tilde{x}_t; \alpha_t),y_t) - \sum_{t=1}^N \ell(g(\tilde{x}_t; w^*),y_t) &\le \text{Regret}_N(\gA_\alpha) + \text{Regret}_N(\gA_g).
\end{align*}
\end{prop}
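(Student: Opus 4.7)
The plan is to bound $\sum_{t=1}^N \ell(f_t(\tilde{x}_t;\alpha_t),y_t)$ by a chain of inequalities that peel off, one at a time, the two sources of suboptimality: the online adaptation of $\alpha$ and the online adaptation of $w$. The key enabling structural fact is that the simple model $f_t = \alpha_t g(\cdot;w_t) + (1-\alpha_t) h(\cdot;u_t)$ is a convex combination of the two sub-predictors, so the cross-entropy loss (and, more generally, any loss convex in its first argument, as mentioned in the preceding discussion) satisfies
\begin{equation*}
\ell(f_t(\tilde{x}_t;\alpha_t),y_t) \;\le\; \alpha_t(\tilde{x}_t)\,\ell(g(\tilde{x}_t;w_t),y_t) + (1-\alpha_t(\tilde{x}_t))\,\ell(h(\tilde{x}_t;u_t),y_t).
\end{equation*}
Rewriting the right-hand side, and recognising the surrogate linear loss used in the algorithm, this becomes $\ell(h(\tilde{x}_t;u_t),y_t) + m_t(\alpha_t)$.

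Summing over $t=1,\dots,N$ gives $\sum_t \ell(f_t,y_t) \le \sum_t \ell(h(\tilde{x}_t;u_t),y_t) + \sum_t m_t(\alpha_t)$. I would then invoke the regret bound of $\gA_\alpha$ on the sequence of linear losses $m_t$ to get $\sum_t m_t(\alpha_t) \le \sum_t m_t(\alpha^*) + \text{Regret}_N(\gA_\alpha)$, where $\alpha^*$ is the best fixed comparator in hindsight for the $m_t$ losses.

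Next, I would use the fact that the constant map $\alpha \equiv 1$ is a feasible comparator, so $\sum_t m_t(\alpha^*) \le \sum_t m_t(\mathbf{1}) = \sum_t \bigl(\ell(g(\tilde{x}_t;w_t),y_t) - \ell(h(\tilde{x}_t;u_t),y_t)\bigr)$. Substituting this cancels the $h$-losses exactly, leaving
\begin{equation*}
\sum_{t=1}^N \ell(f_t(\tilde{x}_t;\alpha_t),y_t) \;\le\; \sum_{t=1}^N \ell(g(\tilde{x}_t;w_t),y_t) + \text{Regret}_N(\gA_\alpha).
\end{equation*}
Finally, using that $g(\tilde{x};w)=g(x;w)$ depends only on the query and applying the regret bound of $\gA_g$ yields $\sum_t \ell(g(x_t;w_t),y_t) \le \sum_t \ell(g(x_t;w^*),y_t) + \text{Regret}_N(\gA_g)$, which combined with the previous display gives the claim.

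The main subtlety (rather than a true obstacle) is the bookkeeping around the in-hindsight comparator $\alpha^*$: one must be careful that $\alpha^*$ is the minimiser for the \emph{linear} surrogate losses $m_t$, not for the original losses of $f$, and that its explicit closed form (stated in the paragraph before the proposition) coincides with what one obtains by picking $\alpha\in\{0,1\}$ independently per $\tilde{x}$; bounding it via the particular choice $\alpha\equiv 1$ is what exposes $\sum_t \ell(g(\tilde{x}_t;w_t),y_t)$ so that $\gA_g$'s regret bound can be applied on the query sequence. Everything else is algebra and the standard online-learning regret decomposition.
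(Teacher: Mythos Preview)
Your proof is correct and follows essentially the same decomposition as the paper: apply convexity of the loss to upper-bound $\ell(f_t,y_t)$ by the convex combination of $\ell(g_t,y_t)$ and $\ell(h_t,y_t)$, peel off $\text{Regret}_N(\gA_\alpha)$, reduce to $\sum_t \ell(g_t,y_t)$, and then peel off $\text{Regret}_N(\gA_g)$. The only cosmetic difference is in the middle step: the paper adds and subtracts the $\alpha^*$ terms and then uses the explicit per-$\tilde{x}$ characterisation of $\alpha^*$ (splitting into $\gX_{IWL}$ and its complement and bounding $\ell_t(h_t)\le \ell_t(g_t)$ on the complement), whereas you bypass the closed form entirely by observing that $\alpha\equiv 1$ is a feasible comparator for the linear losses $m_t$, which yields the same inequality $\sum_t m_t(\alpha^*)\le \sum_t(\ell_t(g_t)-\ell_t(h_t))$ in one line; this is a minor streamlining of the same argument.
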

Since the problem of learning $\alpha^*$ is linear, common gradient-based algorithms have sublinear regret.
For example, both the exponentiated gradient and the AdaGrad method with a diagonal preconditioner achieves $O(\sqrt{KN})$ regret, where $K$ is the number of distinct examples seen during the learning process \citep[see, e.g.,][]{CBLu06,hazan2023introductiononlineconvexoptimization}.
Further, since $g$ learns a separate (independent) predictor for each $x$, and $\ell$ is convex, the problem of learning $g(x)$ is convex.
Many regret-minimizing algorithms, including gradient descent, has $O(\sqrt{K'N})$ regret in this setting, where $K'$ is the number of distinct queries observed.
In this case, we have overall sublinear regret against the best in-weight learner $g^*$. This guarantee implies that on average, our loss converges to that of the best in-weight predictor in hindsight.
In Appendix~\ref{subsec:best_in_hindsight_regret}, we extend this result to show that Algorithm~\ref{alg:alpha} converges to the best-in-hindsight predictor generally.

Suppose the examples $\{(\tilde{x}_t, y_t)\}_{t=1}^N$ are drawn i.i.d.\ from the population distribution. Then by online-to-batch conversion \citep{online_2_batch}, the expectation of the average loss of $g(\cdot; w_t)$ in Algorithm \ref{alg:alpha} exhibits similar behavior as the generalization-error guarantee. We give a standard result below. If we consider the same class of ICL predictors, they also will have a positive minimum loss for each $\ell_t$ as in the previous section.

\begin{prop}\label{prop:online_to_batch} Fix $x\in \gX$. Suppose $\{(\tilde{x}_t, y_t)\}_{t=1}^N$ are drawn i.i.d. from $\gD$, where $y$ is drawn i.i.d. given the query $x$. Let $N_x$ be the number of examples with $x$ as the query, then 
$$\frac{1}{N_x}\E_{y_t:x_t=x}\left[\sum_{t:x_t = x} \ell_t(g(x;w_t))\right] \le \min_{w} \E_{y}[\ell(g(x; w),y)] + \frac{1}{N_x}\E\left[\text{Regret}_{N_x}(\gA_g(x))\right],
$$
where $\text{Regret}_{N_x}(\gA_g(x))$ is the regret of $\gA_g$ on the input $x$, learned over $N_x$ time steps. 
\end{prop}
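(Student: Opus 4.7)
The plan is to treat this as a standard online-to-batch conversion, with the mild twist that the sub-sequence of interest (time steps where the query equals the fixed $x$) is interleaved inside the full training sequence. I would condition on the entire query sequence $(x_1,\ldots,x_N)$; this fixes both $N_x$ and the set $T_x = \{t : x_t = x\}$. Conditionally on these queries, the labels $y_t$ for $t \in T_x$ are i.i.d.\ draws from $y^*(x)$, which is the only source of randomness to deal with.

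The key structural observation I would invoke next is that the IWL class $\gG$ is tabular, so the component of $w_t$ that affects $g(x;w_t)$ is updated by $\gA_g$ only at time steps in $T_x$; equivalently, the iterates $\{w_t\}_{t\in T_x}$ form the output of an online algorithm $\gA_g(x)$ run on the length-$N_x$ loss sequence $(\ell_t(g(x;\cdot)))_{t\in T_x}$. Applying the definition of regret for this restricted sequence yields, deterministically,
\begin{equation*}
\sum_{t \in T_x} \ell_t(g(x;w_t)) \;\le\; \min_{w} \sum_{t \in T_x} \ell_t(g(x;w)) + \text{Regret}_{N_x}(\gA_g(x)).
\end{equation*}

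Next I would take the conditional expectation over the labels $(y_t)_{t \in T_x}$. For the first term on the right-hand side, I use the standard inequality $\E[\min_w \sum_t \ell_t(g(x;w))] \le \min_w \sum_t \E[\ell_t(g(x;w))]$; since the $y_t$'s are i.i.d.\ given query $x$, each summand on the right equals $\E_y[\ell(g(x;w),y)]$, giving $N_x \min_w \E_y[\ell(g(x;w),y)]$. Dividing by $N_x$ (which is constant given the conditioning) produces exactly the stated inequality. If one prefers an unconditional statement, a final outer expectation over the queries leaves the bound unchanged, since $N_x$ appears symmetrically on both sides.

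The argument is essentially routine once the correct conditioning is in place, so there is no real obstacle. The only subtlety worth emphasizing is the use of tabularity: without it, $w_t$ would not be measurable with respect to the filtration generated by $\{y_s : s < t, x_s = x\}$ alone, and the restricted-sequence regret bound could not be applied as a black box. I would make this reduction explicit before invoking the regret inequality, so that the interchange of min and expectation is unambiguously justified.
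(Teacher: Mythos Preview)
Your proposal is correct and is exactly the standard online-to-batch argument the paper invokes. The paper does not give a dedicated proof of this proposition (it is introduced as ``a standard result'' with a citation to \citet{online_2_batch}), but the same reasoning appears embedded in the proof of Proposition~\ref{prop:iwl_generalization}: restrict to the subsequence $S_x$ of examples with query $x$, use that the labels there are i.i.d.\ from $y^*(x)$, apply the regret definition on that subsequence, and bound $\E[\min_w \sum_t \ell_t]$ by $\min_w \sum_t \E[\ell_t]$. Your explicit remark that tabularity is what makes the restricted iterates $w_t(x)$ a bona fide online sequence (measurable with respect to the past labels at $T_x$ only) is a useful clarification that the paper leaves implicit.
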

In other words, although the model in practice does not directly observe the test error during training, the training samples observed at each training iteration can be viewed as a ``test" sample, its loss provides a quantity similar to that of the test error.

The above results show that as long as $\gA_g$ and $\gA_\alpha$ are no-regret algorithms (which are satisfied in this setting by simple gradient descent, see, e.g., \citealp{Zin03}), the performance of the final predictor approximates well the performance of the optimal predictor, and $\alpha_N$ approximates well $\alpha^*$, implying that the learned model will select IC on IW prediction depending on their performance. This shows that the unrealistic step of choosing between an IC or IW predictor based on their (unknown) test error, as suggested at the end of Section~\ref{sec:testerror}, can be replaced by a decision rule learned during training using an algorithm similar to gradient descent.
Note that we have not included the complexity of learning the IC predictor $h$.
Nevertheless, we demonstrate experimentally in the next section that the predictions drawn from our simple models are valid in many practical scenarios.

\vspace{-2mm}
\section{Experiments}
\vspace{-2mm}

In this section we demonstrate through experiments that our simple theoretical model is predictive and can help explain \revision{qualitatively} the relationship between ICL and IWL in practical settings.
\label{sec:experiments}
\vspace{-2mm}
\subsection{Synthetic classification}
\vspace{-2mm}
\label{subsec:synthetic_classification}
We consider a synthetic classification task and
investigate how the errors of IC and IW predictors can inform whether a transformer trained end-to-end will exhibit ICL capabilities.
How different parameters and \revision{other model architectures} impact the emergence and transience of ICL are respectively explored in Appendices~\ref{sec:more_synthetic} and \revision{\ref{sec:other_models}}.

\vspace{-2mm}

\paragraph{Data.}
We consider a classification problem with an imbalanced data distribution, where there are high- and low-frequency (i.e., common and rare) classes. Let $C_H$ and $C_L$ denote 
their label sets, respectively, and let $C = C_H \cup C_L$.
Each sample is generated by first selecting a target query-response pair $(x, y)$ then generating the accompanying context sequence.
An input-output pair $(x, y)$ is sampled from a joint distribution $p(x, y)=p(x \vert y) p(y)$, which we refer to as the \emph{base distribution}. First we sample a label from the label distribution $p(y)$ which is a mixture of uniform distributions over the high- and low-frequency classes $C_H$ and $C_L$, 
with weights $p_{high}$ and $p_{low}=1-p_{high}$:  that is, $p(y)=p_{high} /|C_H|$ if $y \in C_H$ and $p(y)=p_{low}/|C_L|$ if $y \in C_L$; we assume that $p_{high}\gg p_{low}=1-p_{high}$.
The distribution of queries for class $y \in C$ is parametrized by a prototype vector $x_y$ from the $d$-dimensional unit sphere; for each $y \in C$, $x_y$ is sampled uniformly from the unit sphere and is a fixed parameter of the distribution $p(x|y)$. Then $x$ is obtained by normalizing a perturbed version of the corresponding prototype vector $x_y$, where the perturbation is Gaussian; formally, $x= (x_y + \varepsilon)/\lVert x_y + \varepsilon \rVert_2$, where $\varepsilon \sim \gN(0, \Sigma)$.
To sample the context to a query-target pair $(x,y)$, first with probability $p_{relevant}$ we decide if the context is \emph{relevant} to $(x,y)$, that is, if it should contain at least one query-target pair from the same class $y$. Then the context is obtained by sampling $L$ input-output pairs from $p$ independently under the condition that the resulting $L$-tuple contains a class-$y$ sample if and only if the context should be relevant.
\vspace{-3mm}

\todoa[inline]{\textbf{New synthetic problems:}

One issue with the experiments is that we can't get ICL work for larger L. To achieve this, we could increase the data diversity such that we have a small number of ICL classes, but each ICL class is composed of a large number of independent prototypes (+noise), and every time we use such an ICL class, we pick a different prototype. Using a sufficiently large number of prototypes/class should be enough to achieve ICL for any $L$. This is better than using a large $C_L$ with different labels, as learning the label embedding $\to$ label mapping is similarly hard as memorization. The problem with this setting is that it will not show that ICL is transient. Hence, we can have two experiments: (i) as described above, every time using a new random prototype (uniformly random from the sphere), until ICL is achieved. (ii) Then we fix the dataset, and start training longer. In this way eventually we will memorize all ICL examples, and IWL will emerge.

\textbf{Variant:} We only have a small number of classes (say 2), and each class has a frequent prototype and many infrequent prototypes (as the prototypes in $C_H$ and $C_L$ above. For in-weight examples we choose the frequent prototypes, for in-context the in-frequent ones.

\textbf{Noisy setting:} These settings will still not produce a case when IWL will be better than ICL for the infrequent data - relevant context case. This can be controlled by noise. We can add noise to the label in multiple ways: For the in-weight examples add independent noise (i.e., flip the label), this would make the minimum error of IWL positive. For ICL we can add independent and dependent noise. These can have different effects as follows. Let the label-flipping probability be $p<1/2 1$. Then IWL has minimum error $p$. ICL with independent noise and a single relevant context will have error probability $1- [(1-p)^2 + p^2] = 2p-2p^2$ (calculating the joint distribution of flipping the label in the context and in the query) when there are only two classes (so flipping the labels will lead to the same class), and this will change to approximately $2p-p^2$ when we have many classes (so the context and query label only match if there is no label flipping). In both cases, the minimum error of the ICL is larger than $p$ so adding label noise makes IWL preferred to ICL. Probably we should add this calculation to the response and the theory. Going for correlated noise, we can have positively correlated noise, e.g., the labels are flipped in the context and the query at the same time, which makes the IWL-error $0$, while anti-correlated noise (flipping at different times) would require that with probability $2p$ we flip one of the two (query and context) labels ($2p$ to get the marginal error rate $p$), resulting in a $2p$ ICL error. But since this is similar to the independent noise case (assuming $p^2 \ll p$), probably looking at the independent error case is enough. The theoretical statement could be that label noise makes the model learn IWL.}

\paragraph{Experimental setup.}
\begin{figure}[t]
    \centering
    \includegraphics[width=0.88\textwidth]{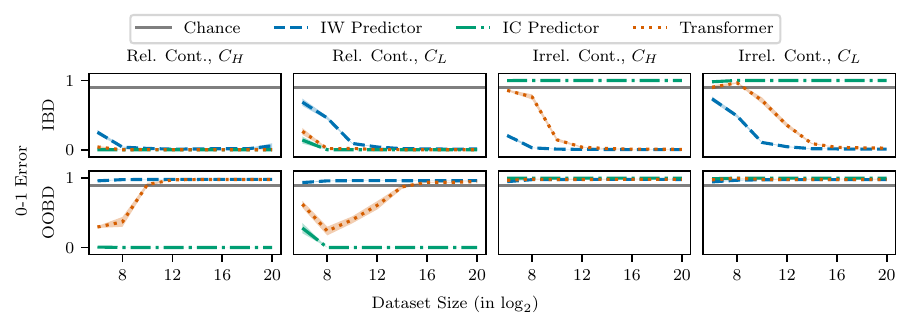}
    \vspace{-4mm}
    \caption{
    0-1 validation errors of the IC predictor, IW predictor, and transformer as a function of training set size $N$ on the synthetic data, over five seeds.
    $L = 1$, $p_{high} = 0.9$, $p_{relevant} = 0.9$ and $\sigma=0.2$.
    The columns correspond to test data with relevant/irrelevant context, and classes from the high-frequency (denoted $C_H$) or low-frequency classes (denoted $C_L$). The top row shows IBD error on the specified conditional data distribution, while the bottom row shows OOBD error.
    ICL diminishes as $N$ increases and IWL and ICL can emerge simultaneously.
    \vspace*{-2mm}
    }
    \label{fig:synthetic-transformer-noisy_inputs_0.2}
\end{figure}
We conduct experiments by training a transformer (GPT) end-to-end \citep{radford2018improving}.
The models consist of two transformer decoder layers, each with a single attention head and processes 64-dimensional embeddings.
Both the input and output tokenizers are linear projections, where the former is the identity matrix.
For prediction we take the last token output from the last transformer block and feed it into a linear layer followed by a softmax.
To probe the difficulty of IWL and ICL, we separately train two transformers for these two settings by using data with $p_{relevant} = 0.0$ and $p_{relevant} = 1.0$; we refer to these models as the \emph{IW and IC predictors}, respectively.
\revision{
These transformers only act as proxies for measuring whether it is possible to perform ICL and IWL in the idealized case.
}
A generic model trained on data with $p_{relevant} = 0.9$ is referred to as the \emph{transformer}. 
Unless specified otherwise, all models are trained using cross-entropy loss for 50K gradient updates with batch size of 32.
We set $C_H = \{0, \dots, 4 \}$ and $C_L = \{5, \dots, 9 \}$, and set the input dimension $d = 64$ with $\Sigma = \sigma^2 I_d$ where $\sigma = 0.2$ and $I_d$ is the $d \times d$ identity matrix.
Furthermore, we fix the context length $L = 1$, the probability of sampling high-frequency classes $p_{high} = 0.9$, 
and vary the number of total samples $N = \{2^6, 2^8, \dots, 2^{20}\}$.
We repeat each experiment five times, and in the figures we show confidence intervals of width 1 standard deviation.

\vspace{-3mm}

\paragraph{Evaluation.}
To evaluate whether the trained models exhibit IWL or ICL, we evaluate the trained models on \emph{in-base distribution} (IBD) and \emph{out-of-base distribution} (OOBD). The OOBD data is obtained from the base distribution data by cyclically shifting the labels of the high-frequency and low-frequency classes respectively. Since the labels in the IBD and OOBD are different for each query, any purely IW predictor trained on IBD will fail on OOBD data, while the latter can be predicted IC if the context is relevant.
A model that exhibits purely IWL will result in low IBD error regardless of context relevance, but will result in high OOBD error.
A model that exhibits purely ICL will result in low error on relevant contexts, but will result in high error on irrelevant contexts.

\vspace{-3mm}

\paragraph{Main results.}
\begin{figure}[t]
\vspace{-3mm}
    \begin{subfigure}[t]{\textwidth}
        \centering
        \includegraphics[width=0.88\textwidth]{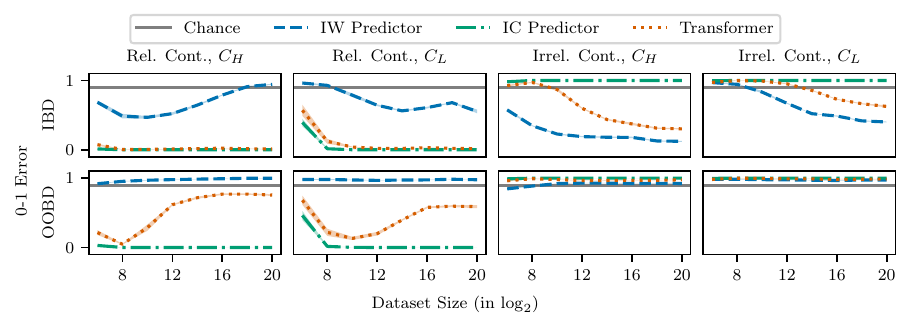}
        \vspace{-2mm}
        \caption{
            $L = 1$, $p_{high} = 0.9$, $p_{relevant} = 0.9$ and $\sigma=0.4$.
            ICL is no longer transient as $N$ increases when the input noise is sufficiently high.
        }
        \label{fig:synthetic-transformer-input_noise_0.4}
    \end{subfigure}\\[0.1em]
    \begin{subfigure}[t]{\textwidth}
        \centering
        \includegraphics[width=0.88\textwidth]{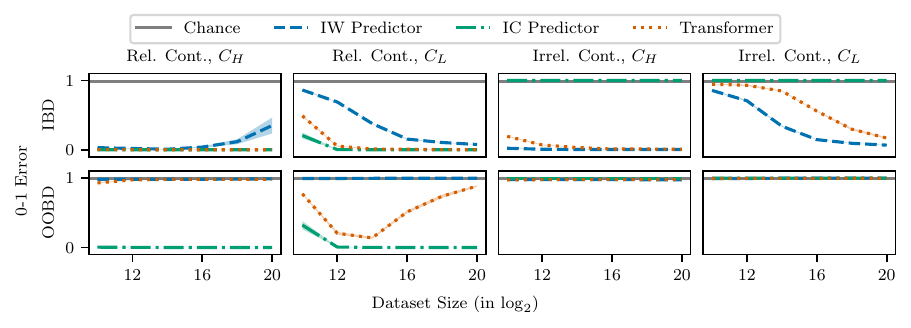}
        \vspace{-2mm}
        \caption{$L = 1$, $p_{high} = 0.9$, $p_{relevant} = 0.9$ and $|C_L| = 95$.
        ICL persists for larger $N$ when compared to $|C_L| = 5$ (Figure~\ref{fig:synthetic-transformer-noisy_inputs_0.2}) but remains transient with sufficiently large $N$.
        }
        \label{fig:synthetic-transformer-num_low_freq_95}
    \end{subfigure}
    \caption{
    \revision{
    0-1 validation errors of the IC predictor, IW predictor, and transformer as a function of training set size $N$ on the synthetic data, over five seeds.
    }
    \vspace*{-2mm}
    }
    \label{fig:synthetic-transformer-dist_params}
\end{figure}
Our first experiments aim to identify conditions in which ICL emerges and/or is transient.
Figure~\ref{fig:synthetic-transformer-noisy_inputs_0.2} shows that slight input perturbation can elicit ICL capabilities from the transformer.
As the dataset size $N$ increases, the IW predictor achieves similar IBD error as the IC predictor, which correlates with the transformer losing the ICL capabilities.
We further observe that ICL diminishes quicker for high-frequency classes compared to low-frequency classes, as the IW predictor requires more samples to achieve near-zero error for $C_L$.
Consequently there is a phase in which the transformer can perform ICL and IWL simultaneously.

Modifying the input noise can dictate whether ICL emerges and whether ICL persists throughout training, as observed by \cite{chan2022data, chan2022}. Consistently, our theory suggests that adding more noise results in a harder learning problem for the IW predictor, and hence we would expect ICL to emerge and be more persistent in the presence of higher noise level.
\revision
{
In agreement with this,
Figure~\ref{fig:synthetic-transformer-input_noise_0.4} shows that ICL is never transient for $\sigma=0.4$; as predicted by our theory, when there is little to no noise, IWL is (nearly) perfect whereas ICL is incorrect with irrelevant context, while at higher noise levels the transformer can no longer rely on IWL and can only perform well with ICL.
More results on varying noise level can be found in Figure~\ref{fig:ablation-synthetic-transformer-noisy_inputs}
in Appendix~\ref{subsubsec:synth_base_params}.}

\vspace*{-3mm} 

\paragraph{Varying distributional parameters.}
\revision{
We further vary the parameters of the data-generating distribution and study the resulting ICL performance in Appendix~\ref{sec:more_synthetic}.
Notably, increasing the number of low-frequency classes ($|C_L|$) makes it harder to get ICL off the ground (possibly due to the increased complexity to learn the mapping from class embeddings to class labels), but ICL diminishes slower, as it is harder to learn an IW predictor for more classes (Figure~\ref{fig:synthetic-transformer-num_low_freq_95}).
}

\begin{figure}[t]
    \centering
    \includegraphics[width=0.88\textwidth]{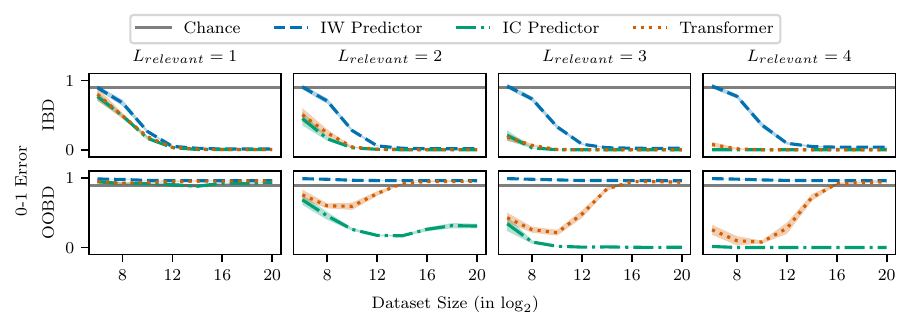}
    \vspace{-3mm}
    \caption{
    \revision{
    0-1 validation errors, conditioned on relevant contexts and low-frequency classes $C_L$, of the IC predictor, IW predictor, and transformer as a function of training set size $N$ on the synthetic data, over five seeds.
    $L = 4$, $p_{high} = 0.9$, $p_{relevant} = 0.9$, and $L_{relevant} = \{1, 2, 3, 4\}$.
    The transformer exhibits stronger ICL as the number of relevant contexts $L_{relevant}$ increases.
    }
    \vspace*{-3mm}
    }
    \label{fig:synthetic-transformer-context_len}
\end{figure}
Based on our theoretical analysis ICL with longer contexts is more difficult to emerge as there can be more distractor contexts, making it harder to identify the relevant context example(s).
To test this we fix $L = 4$ and vary the number of relevant contexts $L_{relevant}$ provided to the transformer.
\revision{
Figure~\ref{fig:synthetic-transformer-context_len} shows that when we provide the transformer with a sufficiently small number of relevant contexts it no longer learns to perform ICL,
suggesting that ICL is more difficult to learn with less relevant contexts.
While transformers might implement ICL differently from our theoretical IC predictor, these findings are consistent with our theoretical analysis.
}
\vspace*{-3mm}
\subsection{Omniglot}
\vspace{-2mm}
\label{subsec:omniglot}
\begin{figure}[t]
    \centering
    \includegraphics[width=0.88\textwidth]{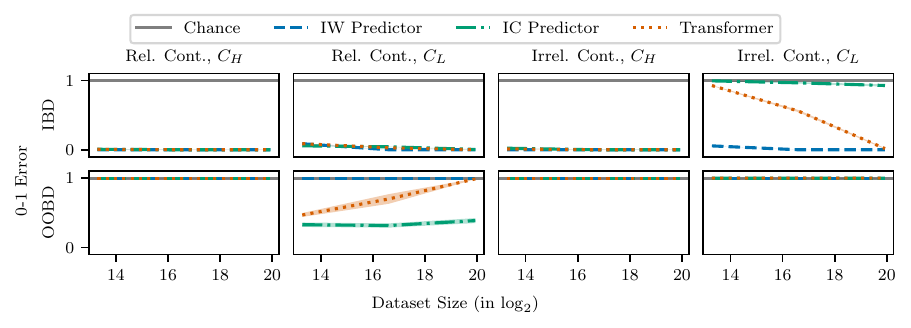}
    \vspace{-4mm}
    \caption{
    \revision{
    0-1 validation errors as a function of the dataset size  $N$ on Omniglot, over three seeds.
    $L = 2$, $p_{high} = 0.9$, $p_{relevant} = 0.9$, and $\sigma = 0.0$.
    The transformer exhibits ICL on low-frequency classes $C_L$ initially but loses said capability as $N$ increases.
    }
    }
    \label{fig:omniglot-noisy_inputs-0.0}
    \vspace{-3mm}
\end{figure}
We now investigate whether our findings in the previous section apply to learning on a natural few-shot learning dataset, Omniglot \citep{lake2015omniglot}, containing images of handwritten characters of different languages.
The model is the same as in the previous subsection, except the input tokenizer is a ResNet \citep{he2016resnet} that maps images to a 64-dimensional embedding.
Our data construction follows Section~\ref{subsec:synthetic_classification} and is different from \citet{chan2022data} and \citet{singh2024transient}.
Specifically, \citet{chan2022data} set the context length $L = 8$ and constructed context sequences such that the relevant contexts included three relevant and three irrelevant examples from another class.
\citet{chan2022data} also modelled the base-level distribution to be Zipfian.
On the other hand, we set the context length to $L = 2$ with a varying number of relevant contexts and input noise $\sigma = 0.1$.
The base-level distribution is again a mixture of uniform distributions over high- ($C_H$) and low-frequency classes ($C_L$), respectively.
We choose the first 20 classes to be $C_H$ and the remaining 1603 classes to be $C_L$.
We vary the number of samples $N = \{ 10^4, 10^5, 10^6 \}$ and train all models for 100k gradient steps.
We repeat each experiment for three random seeds.
\revision
{
We vary the distributional parameters similar to Section~\ref{subsec:synthetic_classification} and present the complete results in Appendix~\ref{sec:more_omniglot}.
}

\revision{
We have found that generally the findings are consistent with that of Section~\ref{subsec:synthetic_classification}.
However having no input noise appears to not play an important role in this case.
Specifically, the transformer trained on Omniglot first exhibits ICL on $C_L$ and ICL becomes transient as $N$ increases even without any input noise (Figure~\ref{fig:omniglot-noisy_inputs-0.0}), in contrast to the finding in the synthetic experiment.
However, this result does not contradict the synthetic experiment---previously we have also shown in the synthetic experiment that with larger number of low-frequency classes the model exhibits stronger ICL---here the number of low-frequency classes is dramatically increased compared to the synthetic experiment.
}
\vspace{-2mm}
\subsection{\revision{Transience of ICL.}}
\vspace{-2mm}
\revision{Our experiments identified several issues related to the transience of ICL. Due to space constraints, a more detailed study of this problem is relegated to Appendix~\ref{sec:transient_icl}.}

\vspace{-2mm}
\section{Finetuning a real language model}
\label{sec:finetune_llm}
\vspace{-2mm}
We conclude our experiments on studying the effect of IWL on the ICL abilities of a real LLM, Gemini Nano  1, with 1.8B parameters \citep{geminiteam2023gemini}. \revision{During the experiment we finetuned the language model with a small number of samples, to demonstrate that memorization of these samples reduces ICL capabilities where the response given in the context is in contradiction with the memorized content (i.e., IWL happens more likely as a result of the additional training).} 

We finetuned the LLM to memorize where certain people live. The finetuning data, given in Table~\ref{tab:finetune_data} in Appendix~\ref{app:llm data}, 
contains eight questions regarding where a person resides, with four real and four invented person and city names (e.g., Question: \emph{Where does Kaitlyn live? Only give the name of the city.} Answer: \emph{Kingston}), and is designed so that the language model must rely on IWL to learn the correlations between (name, city) pairs.
Table~\ref{tab:llm_iw_preds} in Appendix~\ref{app:llm data} shows that the finetuned model has learned to correctly predict the real name-and-city pairs in the finetuning dataset (using greedy sampling), while making mistakes half of the time for the invented name-and-city pairs. In contrast, the base model, naturally, cannot answer any of the questions correctly.

To test ICL capability, before asking the finetuning prompt we also state that the person lives in another city. For example, adding that the person lives in the imaginary city \emph{Kjheergg} (resulting in prompts like \emph{Kaitlyn lives in Kjheergg. Where does Kaitlyn live? Only give the name of the city.}), the base model always uses ICL and answers \emph{Kjheergg}, but the finetuned model reverts to IWL in some cases, indicating that the in-weight information can overwrite the in-context prediction present in the base model.
\revision{Even when this does not happen, the probability of selecting the memorized answer increases compared to the base model,}
demonstrating that ICL can be transient even in real LLMs (see Table~\ref{tab:llm_ic_preds} in Appendix~\ref{app:llm data}).
Additional details about these experiments are given in Appendix~\ref{app:llm data}.

\vspace{-2mm}
\section{Conclusion}
\vspace{-2mm}

In this paper we introduced a simple theoretical framework, motivated by induction heads \citep{olsson2022}, where in-weight and in-context predictors are learned separately, but a gating mechanism learns to combine them given the query and the context.
Intuitively, this mechanism suggests that in regions of the input space where the model is able to learn a predictor that generalizes well (i.e., it has seen enough samples to achieve low error relative to the noise and the complexity of the function to be learned), in-weight learning happens.
On the other hand, in regions of the input space where data is scarce and in-weight learning is not possible, in-context learning emerges given there is enough diverse training data where the context is useful. 
Furthermore, training longer with more data from the latter part of the input space results in a more confident in-weight predictor, making the model shift to in-weight from in-context prediction.
We demonstrated experimentally that a transformer trained on synthetic data follows similar patterns, and also showed that in-context learning can be overwritten by in-weight learning in real language models if they are made to memorize more information in their weights. While earlier work \citep{chan2022data} connected these phenomena to distributional properties of the data, we demonstrated that it is rather the learnability of the data (naturally affected by its distribution) which matters. 

These results contribute to the understanding of when and why in-context learning is present in language models. Furthermore, they might lead to new ideas in designing training schedules for large language models; exploring this avenue is left for future work.

\section*{Acknowledgements}

We would like to thank Stephanie Chan and the anonymous reviewers for their insightful comments that helped improve the manuscript.
We would also like to acknowledge the support from the Canada CIFAR AI Chairs program, the Alberta Machine Intelligence Institute (Amii), and the Natural Sciences and Engineering Research Council (NSERC) of Canada.
\bibliography{reference.bib}
\bibliographystyle{iclr2025_conference}

\newpage
\appendix
\section{Proofs and additional theoretical results}
\label{sec:proofs}
\revision{We start by stating a usual generalization bound \citep{BoLuMa12:concbook} for a learning algorithm for general convex losses (together a proof for completeness), and then present the missing proofs for Section~\ref{sec:theory}.}
\begin{prop}\label{prop:iwl_generalization}
\revision{Suppose we have $N$ examples in $S$ drawn independently from the data distribution $\gD$, and let $S_x$ denote the subset of $S$ with $x$ as the query in the example. Let $|S_x| = N_x$. Let $\hat{g}$ be the empirical risk minimizer of $S$, i.e. for any $x$,
$
\hat{g}(x) = \argmin_{\hat{y}\in \Delta_C} \frac{1}{N_x}\sum_{(\tilde{x}, y)\in S_x} \loss(\hat{y}, y).$  Let $G_\infty \ge \|\nabla \ell(\cdot, y)\|_\infty$ be an upper bound on the infinity norm of the gradients, where $G_\infty \ge 1$. Then with probability at least $1-\delta$, for any $x\in \gX$, the risk of $\hat{g}(x)$ satisfies
$$
\E_y[\ell(\hat{g}(x),y)] \le \min_{\hat{y}\in \Delta_C} \E_y[\ell(\hat{y},y)]+ 6G_\infty\sqrt{\log(2|\gX|C/\delta)/N_x}.
$$}
\end{prop}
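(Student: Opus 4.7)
The plan is a standard ERM generalization argument via uniform convergence, exploiting the tabular structure of $\gG$. First, because $g \in \gG$ outputs an independent distribution $w(x) \in \Delta_C$ for each query $x$, the empirical risk decomposes across queries and the ERM can be solved separately on each $S_x$: $\hat g(x) = \argmin_{\hat y \in \Delta_C} \hat L_x(\hat y)$, where $\hat L_x(\hat y) := \frac{1}{N_x} \sum_{(\tilde x, y) \in S_x} \ell(\hat y, y)$ and $L_x(\hat y) := \E_y[\ell(\hat y, y) \mid x]$. Hence it suffices, for each fixed $x$ with $N_x \geq 1$, to bound the excess population risk of the one-query problem and then unionize over $\gX$.

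For a fixed $x$, the standard ERM symmetrization inequality yields
\begin{align*}
L_x(\hat g(x)) - \min_{\hat y \in \Delta_C} L_x(\hat y) \le 2 \sup_{\hat y \in \Delta_C} \bigl| L_x(\hat y) - \hat L_x(\hat y) \bigr|,
\end{align*}
so I would reduce the problem to controlling this supremum. The gradient bound $\|\nabla \ell(\cdot, y)\|_\infty \le G_\infty$ gives the $\ell_1$-Lipschitz estimate $|\ell(\hat y_1, y) - \ell(\hat y_2, y)| \le G_\infty \|\hat y_1 - \hat y_2\|_1$. Taking an $\ell_1$ $\epsilon$-cover $\mathcal{N}_\epsilon \subset \Delta_C$, Hoeffding's inequality (with $\ell \in [0,1]$) combined with a union bound over $\mathcal{N}_\epsilon$ and $x \in \gX$ gives, with probability $\ge 1-\delta$, a uniform concentration bound $|\hat L_x(\hat y') - L_x(\hat y')| \le \sqrt{\log(2 |\gX| |\mathcal{N}_\epsilon| / \delta)/(2 N_x)}$ at every cover point and query. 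The Lipschitz property then lifts this to all $\hat y \in \Delta_C$ at cost $2 G_\infty \epsilon$, and choosing $\epsilon = \Theta(1/\sqrt{N_x})$ balances the two terms.

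The main obstacle, and the step I would plan most carefully, is matching the precise $\log(2|\gX| C/\delta)$ factor inside the square root, rather than the $C \log(1/\epsilon)$ that a naive $\ell_1$-cover of $\Delta_C$ produces. I expect this to follow from exploiting the fact that, under the data-generating process of Section~\ref{subsec:setting}, $y$ is one-hot conditional on $x$, so both $L_x(\hat y) = \sum_c p_c(x)\, \ell(\hat y, e_c)$ and $\hat L_x(\hat y) = \sum_c \hat p_c(x)\, \ell(\hat y, e_c)$ are linear in the conditional label distribution. Then $\sup_{\hat y} |L_x(\hat y) - \hat L_x(\hat y)|$ is controlled by a suitable norm of the deviation $\hat p(\cdot \mid x) - p(\cdot \mid x)$, which in turn can be bounded by applying a per-coordinate Bernoulli concentration inequality and union-bounding over the $|\gX| \cdot C$ pairs $(x, c)$, producing exactly the $\log(2|\gX|C/\delta)$ term. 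The constant $6$ and the $G_\infty$ dependence then come from the Lipschitz step that converts coordinatewise deviations of $\hat p$ into a uniform bound on $|L_x - \hat L_x|$ over $\Delta_C$, plus the factor of $2$ from the symmetrization inequality above.
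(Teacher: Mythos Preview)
Your uniform-convergence plan is a reasonable alternative, but it differs from the paper's route, and the step you flag as ``the main obstacle'' is where your argument has a real gap.

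The paper does not use covering or concentration of $\hat p(\cdot\mid x)$ at all. For each fixed $x$ it runs the exponentiated-gradient algorithm on the convex losses $\ell_1,\ldots,\ell_{N_x}$, invokes the $O(G_\infty\sqrt{N_x\log C})$ regret bound for EG over the simplex, and applies online-to-batch conversion (plus a martingale concentration term) to the averaged iterate $\bar w$. The $\log C$ inside the square root comes directly from the entropic regret bound of EG; a final union bound over $x\in\gX$ supplies the $|\gX|$ factor. So the paper's argument is regret-based, not a covering argument.

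Your proposed fix does not recover the $\log C$ dependence. Writing $L_x(\hat y)-\hat L_x(\hat y)=\sum_c (p_c-\hat p_c)\,\ell(\hat y,e_c)$ is correct, and per-coordinate Hoeffding does give $\|\hat p-p\|_\infty\lesssim\sqrt{\log(|\gX|C/\delta)/N_x}$. But to control $\sup_{\hat y}\bigl|\sum_c(p_c-\hat p_c)\ell(\hat y,e_c)\bigr|$ you need $\|\hat p-p\|_1$ (the coefficients $\ell(\hat y,e_c)\in[0,1]$ have $\ell_\infty$-norm at most $1$, so H\"older pairs them with the $\ell_1$-norm of $p-\hat p$), and going from $\ell_\infty$ to $\ell_1$ costs a factor of $C$. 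The bound $G_\infty$ does not help: it controls the sensitivity of $\ell(\hat y,y)$ in the prediction $\hat y$, not in the label distribution, so there is no ``Lipschitz step'' that converts coordinatewise deviations of $\hat p$ into a $C$-free uniform bound on $|L_x-\hat L_x|$. Direct concentration of $\|\hat p-p\|_1$ gives only $O(\sqrt{C/N_x})$. Either way your route yields polynomial rather than logarithmic dependence on $C$, and the stated bound is not reached. The paper's EG-based online-to-batch argument is exactly what produces the $\sqrt{\log C}$ factor without ever needing uniform control of $\hat L_x - L_x$ over $\Delta_C$.
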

\begin{proof}[Proof of Proposition \ref{prop:iwl_generalization}]
We show the generalization bound by online-to-batch conversion. Fix $x\in \gX$, and consider the set of examples in $S$ that contain $x$ as the query. Let this set be $S_x$, where $|S_x| = N_x$, and in our setting the labels in $S_x$ are generated i.i.d. from the distribution $y^*(x)$. For $(\tilde{x}_i, y_i) \in S_x$, define $\ell_i(\cdot) = \ell(\cdot,y_i)$, and let the empirical minimizer be $\hat{w} = \argmin_{w\in \Delta_C} \sum_{i=1}^{N_x} \ell_i(w)$. We will show that 
$$R_x(\hat{w}) = \E[\ell(\hat{w}, y)] \le \argmin_{w\in \Delta_C} \E[\ell(w, y)] + 6G_\infty\sqrt{\frac{\log(2|\gX|C/\delta)}{N_x}},
$$
where the expectation is taken over $y$ drawn from $y^*(x)$.
Given $\ell_1, \ldots, \ell_{N_x}$, let $w_1, \ldots, w_{N_x}$ be the output of the exponentiated gradient descent algorithm:
$$w_{i} = EG(\ell_1, \ldots, \ell_{N_x}). 
$$
Let $\bar{w} = \frac{1}{N_x}\sum_{i=1}^{N_x} w_i$ be the average of the outputs. Then with probability at least $1-\delta$, by the online-to-batch conversion \citep{online_2_batch}, we have
\begin{align*}
    R_x(\bar{w}) = \E\left[\ell\left(  \frac{1}{N_x}\sum_{i=1}^{N_x} w_i,y\right)\right] &\le \E\left[\frac{1}{N_x}\sum_{i=1}^{N_x}\ell\left(w_i, y\right)\right]\\
    &\le \argmin_{w} \E[\ell(w,y)] + \frac{1}{N_x}\E\left[\text{Regret}_{N_x}(EG)\right] + 2G_\infty\sqrt{\frac{\log (2/\delta)}{N_x}}
\end{align*}
The EG algorithm over the simplex has regret bounded by $2G_\infty\sqrt{2N_x\log C}$ \citep{hazan2023introductiononlineconvexoptimization}, and we conclude that 
$$
R_x(\bar{w}) \le \argmin_w R_x(w) + 6G_\infty\sqrt{\frac{\log (2C/\delta)}{N_x}}.
$$
A uniform bound over all $x\in \gX$ concludes the proof.
\end{proof}

\begin{proof}[Proof of Proposition \ref{prop:l1_norm_bound}]
Let $\tilde{x}$ be an example sequence, $y$ be the label, and $\gI \subseteq [L]$ be the set of indices with irrelevant labels. Note that $|\gI| = k$. Since $y$ is one-hot, it can be expressed as $y = e_c$ for some $c\in [C]$. We have 
\begin{align*}\|h(\tilde{x}) - y\|_1 = \sum_{i\neq c}h(\tilde{x})_i + 1-h(\tilde{x})_c &= 2\sum_{i\neq c}h(\tilde{x})_i \\&= 2 (1-\eps C)\sum_{l\in \gI}\frac{\exp(-\|x-x^l\|_A)}{\sum_{i=1}^L \exp(-\|x-x^i\|_A)} + 2\eps(C-1).
\end{align*} Note that by definition $\exp(-2\sqrt{B}) \le \exp(-\|x-x^l\|_A) \le 1$, since $
\|x^l - x\|_A^2 = (x^l - x)^\top A(x^l - x)\le \|x^l - x\|_2^2 \|A\|_2 \le 4B.
$ Furthermore, we have $\sum_{i\notin \gI} \exp(-\|x-x^i\|_A) = L-k$.

Since the function $\frac{x}{x+y}$ is nondecreasing in $x$ for $x >0$ and $y\ge 0$, we conclude that 
$$
\frac{k}{k + (L-k)\exp(2\sqrt{B})}  \le \frac{\sum_{l\in \gI}\exp(-\|x-x^l\|_A)}{\sum_{i\in \gI} \exp(-\|x-x^i\|_A) + \sum_{i\notin \gI} \exp(-\|x-x^i\|_A)} \le \frac{k}{L}
$$
The proposition follows by plugging the inequality into the previous bound.
\end{proof}

\begin{proof}[Proof of Corollary \ref{cor:ce_lower_bound}]
Let $\tilde{x}$ be an example sequence, $\gI = \{l\in [L], y_l \neq y\}$ denote the set of labels in the context that are irrelevant, and $|\gI| = k$. 
Suppose $y = e_c$ for some class $c$, we can lower bound the CE loss as follows:
\begin{align*}
    \CE(y, h(\tilde{x})) = -\log h(\tilde{x})_c 
    &= -\log\left(1-\sum_{i\neq c}h(\tilde{x})_i \right) \\
    &\ge \sum_{i\neq c}h(\tilde{x})_i \tag{$-\log(1-x)\ge x$}\\
    &\ge \frac{(1-C\eps)k}{k + (L-k)\exp(2\sqrt{B})} + (C-1)\eps,
\end{align*}
where the last inequality follows from the previous proposition.
In particular, if $k = L$, then $h(\tilde{x})_c = \eps$, and $\CE(y, h(\tilde{x})) = -\log \eps = \log \frac{1}{\eps}$. \revision{Note that for $\eps \in (0, 1)$, our regime of interest, $1-\eps < -\log(\eps)$.}

For the upper bound, note that the function $\frac{x}{x+y}$ is nondecreasing in $x$ for $x > 0$ and $y > 0$, and hence $h(\tilde{x})_c$ is minimized when $\sum_{i\neq c} h(\tilde{x})_i$ is maximized. We thus have  
\begin{align*}
    \log h(\tilde{x})_c \ge \log \left((1-\eps C)\frac{L-k}{L} + 
    \eps \right),
\end{align*}
and the proposition follows by taking the negative on both sides.
\end{proof}

\begin{proof}[Proof of Proposition \ref{prop:regret_two_level}]
Define $\ell_t(\cdot) = \ell(\cdot, y_t)$, and let $g_t(\cdot)$ denote $g(\cdot; w_t)$, $h_t(\cdot) = h(\cdot; u_t)$. Let $g^*(\cdot) = g(\cdot; w^*)$ be the optimal IW predictor. We can decompose the regret of Algorithm \ref{alg:alpha} as follows:
\begin{align*}
\lefteqn{\sum_{t=1}^N \ell_t(f_t(\tilde{x}_t))
  - \sum_{t=1}^N \ell_t(g^*(\tilde{x}_t))} \\
  \le&   \sum_{t=1}^N \alpha_t(\tilde{x}_t) \ell_t(g_t(\tilde{x}_t))
  + \sum_{t=1}^N (1-\alpha_t(\tilde{x}_t)) \ell_t(h_t(\tilde{x}_t)) 
   - \sum_{t=1}^N \ell_t(g^*(\tilde{x}_t)) \\
  =& \sum_{t=1}^N (\alpha_t(\tilde{x}_t) - \alpha^*(\tilde{x}_t))(\ell_t(g_t(\tilde{x}_t)) - \ell_t(h_t(\tilde{x}_t)))\\
  & + \sum_{t=1}^N \alpha^*(\tilde{x}_t)\ell_t(g_t(\tilde{x}_t)) + (1- \alpha^*(\tilde{x}_t))\ell_t(h_t(\tilde{x}_t)) - \ell_t(g^*(\tilde{x}_t))\\
\le& \text{Regret}_N(\gA_\alpha) + \sum_{t=1}^N \alpha^*(\tilde{x}_t)(\ell_t(g_t(\tilde{x}_t)) - \ell_t(g^*(\tilde{x}_t))) + \sum_{t=1}^N (1-\alpha^*(\tilde{x}_t))(\ell_t(h_t(\tilde{x}_t)) - \ell_t(g^*(\tilde{x}_t)))
\end{align*}

By definition, $\alpha^*(\tilde{x}) = 1$ for all $\tilde{x}$ such that $\sum_{\tilde{x}_t = \tilde{x}} \ell_t(h_t(\tilde{x}_t)) - \ell_t(g_t(\tilde{x}_t)) \ge 0$, and otherwise $\alpha^*(\tilde{x}) = 0$. Let $\gX_{IWL}$ be the set of examples where $\alpha^*(\tilde{x}) = 1$. To bound the second sum, we have:
\begin{align*}
    \sum_{t=1}^N \alpha^*(\tilde{x}_t)(\ell_t(g_t(\tilde{x}_t)) - \ell_t(g^*(\tilde{x}_t))) = \sum_{t:\tilde{x}_t\in \gX_{IWL}} \ell_t(g_t(\tilde{x}_t)) - \ell_t(g^*(\tilde{x}_t))
\end{align*}

Similarly, for the third sum, 
\begin{align*}
    \sum_{t=1}^N (1-\alpha^*(\tilde{x}_t))(\ell_t(h_t(\tilde{x}_t)) - \ell_t(g^*(\tilde{x}_t))) &= \sum_{t:\tilde{x}_t \notin \gX_{IWL}} \ell_t(h_t(\tilde{x}_t)) - \ell_t(g^*(\tilde{x}_t)) \\
    &\le \sum_{t:\tilde{x}_t \notin \gX_{IWL}} \ell_t(g_t(\tilde{x}_t)) - \ell_t(g^*(\tilde{x}_t))
\end{align*}
Putting the three terms together, we have 
\begin{align*}
    \sum_{t=1}^N \ell_t(f_t(\tilde{x}_t))
  - \sum_{t=1}^N \ell_t(g^*(\tilde{x}_t)) &\le \text{Regret}_N(\gA_\alpha) + \sum_{t:\tilde{x}_t\in \gX_{IWL}} \ell_t(g_t(\tilde{x}_t)) - \ell_t(g^*(\tilde{x}_t)) \\
  &+ \sum_{t:\tilde{x}_t \notin \gX_{IWL}} \ell_t(g_t(\tilde{x}_t)) - \ell_t(g^*(\tilde{x}_t)) \\
  &= \text{Regret}_N(\gA_\alpha) + \text{Regret}_N(\gA_g)
\end{align*}
\end{proof}

\subsection{\revision{Extension to continuous input space}}
\label{subsec:theory_extension}
\revision{
Similar bounds can be obtained for the continuous input space using standard results from statistical learning theory \citep{bousquet2003introduction,CsSh04}.
Indeed, one can obtain generalization bounds for the  IW predictor that decay with $O(1/\sqrt{N_x})$ (for convex losses) and $O(\log N_x/N_x)$ (for the cross-entropy loss) depend on some complexity measures of the function class, e.g., Rademacher complexity.
These bounds still show that as the sample size increases, the excess risk converges to 0 in the limit. Our regret bounds in Section~\ref{subsec:select_ic_iw} automatically hold  on continuous domains.
However, to ensure that the regret for learning $\alpha$ can be meaningfully optimized, we need some additional assumptions (in the tabular setting this is an online learning problem with linear loss functions).
A sufficient assumption is that the function $\alpha$ can be parametrized with a finite number of parameters such that the loss function is convex in these parameters and the resulting function class is rich enough to partition the input space into two sets such that IC prediction is better on one set and IW is better on the other.
}

\subsection{Extension to selecting best-in-hindsight predictor}
\label{subsec:best_in_hindsight_regret}
In some cases the best IW predictor is not better than the best IC predictor, and vice versa.
Here we show that Algorithm~\ref{alg:alpha} selects the best-in-hindsight predictor based on the observed losses.
\begin{prop}
    \label{prop:regret_two_level_new}
    Let $\tilde{x}_t$ and $x_t$ respectively denote the sequence and query at time $t$.
    Define the following as the regret of learning $g$, $h$, and $\alpha$ respectively:
    \begin{align*}
        \text{Regret}_N(\gA_g) &= \sum_{t=1}^N\ell(g(x_t; w_t),y_t) - \sum_{t=1}^N \ell(g(x_t; w^*),y_t),\\
        \text{Regret}_N(\gA_h) &= \sum_{t=1}^N\ell(h(\tilde{x}_t; u_t),y_t) - \sum_{t=1}^N \ell(h(\tilde{x}_t; u^*),y_t),\\
        \text{Regret}_N(\gA_\alpha) &= \sum_{t=1}^N m_t(\alpha_t) - m_t(\alpha^*).
    \end{align*}
    Let $f^*(\tilde{x}_t) := f(\tilde{x}_t; w^*, u^*, \alpha^*)$ be the best-in-hindsight $f$.
    Suppose there are $N = N_{IWL} + N_{ICL}$ rounds with $N_{IWL}$ rounds of in-weight (IW) examples and $N_{ICL}$ rounds of in-context (IC) examples, decided based on comparing the errors of the best-in-hindsight IW and IC predictors, $\ell(g^*(x), y)$ and $\ell(h^*(\tilde{x}, y))$.
    Then, Algorithm~\ref{alg:alpha} satisfies
    \begin{align*}
        \sum_{t=1}^N \ell(f_t(\tilde{x}_t; \alpha_t),y_t) - \sum_{t=1}^N \ell(f^*(\tilde{x}_t),y_t) &\le \text{Regret}_N(\gA_\alpha) + \text{Regret}_{N_{IWL}}(\gA_g) + \text{Regret}_{N_{ICL}}(\gA_h).
    \end{align*}
\end{prop}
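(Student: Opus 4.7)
The plan is to mirror the proof strategy of Proposition~\ref{prop:regret_two_level}, with the key modification that we now compare against the best-in-hindsight combined predictor $f^*$ rather than solely the best IW predictor $g^*$. This requires replacing the (slightly loose) step at the end of the original proof---where on IC-preferred rounds the excess loss of $h_t$ over $g^*$ was absorbed into an IW-style regret---by a finer analysis that exploits the binary $\{0,1\}$ structure of $\alpha^*$ recalled right before Algorithm~\ref{alg:alpha}. I would first use convexity of $\ell$ in its first argument to upper bound $\ell_t(f_t(\tilde{x}_t))$ by $\alpha_t(\tilde{x}_t)\ell_t(g_t(\tilde{x}_t)) + (1-\alpha_t(\tilde{x}_t))\ell_t(h_t(\tilde{x}_t))$. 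Because $\alpha^* \in \{0,1\}$, the combined optimum satisfies $\ell_t(f^*(\tilde{x}_t)) = \alpha^*(\tilde{x}_t)\ell_t(g^*(x_t)) + (1-\alpha^*(\tilde{x}_t))\ell_t(h^*(\tilde{x}_t))$ \emph{exactly}, so no convexity slack is introduced on the $f^*$ side.

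Next, I would add and subtract the mixed term $\sum_t [\alpha^*(\tilde{x}_t)\ell_t(g_t(\tilde{x}_t)) + (1-\alpha^*(\tilde{x}_t))\ell_t(h_t(\tilde{x}_t))]$ to split the regret into two pieces. The first piece reorganizes to $\sum_{t=1}^N (\alpha_t(\tilde{x}_t) - \alpha^*(\tilde{x}_t))(\ell_t(g_t(\tilde{x}_t)) - \ell_t(h_t(\tilde{x}_t))) = \sum_t m_t(\alpha_t) - m_t(\alpha^*)$, which is exactly $\text{Regret}_N(\gA_\alpha)$. For the second piece, the indicator structure of $\alpha^*$ partitions $\{1,\dots,N\}$ into the $N_{IWL}$ rounds where IW is preferred and the $N_{ICL} = N - N_{IWL}$ rounds where IC is preferred, and the sum splits cleanly into $\sum_{t:\alpha^*(\tilde{x}_t)=1}(\ell_t(g_t(x_t)) - \ell_t(g^*(x_t))) + \sum_{t:\alpha^*(\tilde{x}_t)=0}(\ell_t(h_t(\tilde{x}_t)) - \ell_t(h^*(\tilde{x}_t)))$, matching $\text{Regret}_{N_{IWL}}(\gA_g)$ and $\text{Regret}_{N_{ICL}}(\gA_h)$ as defined in the statement.

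The main obstacle is conceptual rather than technical: one must be explicit that $\text{Regret}_{N_{IWL}}(\gA_g)$ and $\text{Regret}_{N_{ICL}}(\gA_h)$ refer to the excess losses of the \emph{full online parameter sequences} $(w_t)$ and $(u_t)$ restricted to the subset of rounds selected by $\alpha^*$---not to a separate run of $\gA_g$ on $N_{IWL}$ rounds in isolation. Since $\gA_g$ and $\gA_h$ are still updated on every round, we must either (i) argue directly that for standard no-regret learners this restricted regret is upper bounded by their full-horizon regret (which follows because the per-round excess loss against the best fixed predictor is controlled uniformly by the regret-bound construction, e.g.\ for OGD/EG on convex losses), or (ii) adopt the convention that the two sub-algorithms are updated only on the rounds where they are selected by $\alpha^*$, which is closer in spirit to the two-level structure of Algorithm~\ref{alg:alpha}. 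Either interpretation still yields sublinear overall regret whenever $\gA_g$, $\gA_h$, and $\gA_\alpha$ are individually no-regret, so Algorithm~\ref{alg:alpha} converges on average to the best-in-hindsight mixed predictor $f^*$.
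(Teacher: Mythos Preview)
The proposal is correct and follows essentially the same approach as the paper: both use convexity of $\ell$ on $f_t$, the indicator structure of $\alpha^*$ to write $\ell_t(f^*(\tilde{x}_t)) = \alpha^*(\tilde{x}_t)\ell_t(g^*(x_t)) + (1-\alpha^*(\tilde{x}_t))\ell_t(h^*(\tilde{x}_t))$ exactly, and then add and subtract the mixed term $\sum_t [\alpha^*(\tilde{x}_t)\ell_t(g_t) + (1-\alpha^*(\tilde{x}_t))\ell_t(h_t)]$ (equivalently, $\sum_t \alpha^*(\tilde{x}_t)(\ell_t(g_t) - \ell_t(h_t))$ up to a common $\sum_t \ell_t(h_t)$) to obtain the three-way decomposition into $\text{Regret}_N(\gA_\alpha)$, $\text{Regret}_{N_{IWL}}(\gA_g)$, and $\text{Regret}_{N_{ICL}}(\gA_h)$. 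Your closing caveat about how to interpret the restricted regrets $\text{Regret}_{N_{IWL}}(\gA_g)$ and $\text{Regret}_{N_{ICL}}(\gA_h)$ is a valid subtlety that the paper simply sidesteps by labeling the resulting sums as such without further comment.
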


\begin{proof}[Proof of Proposition \ref{prop:regret_two_level_new}]
    Define $\ell_t(\cdot) = \ell(\cdot, y_t)$, and let $g_t(\cdot)$ denote $g(\cdot; w_t)$, $h_t(\cdot) = h(\cdot; u_t)$.
    Let $g^*(\cdot) = g(\cdot; w^*)$ be the best-in-hindsight IW predictor and  $h^*(\cdot) = h(\cdot; u^*)$ be the best-in-hindsight IC predictor.
    Notice that $\alpha^*(\tilde{x}_t) = \sI \{ \tilde{x}_t \in \gX_{IWL} \}$ for the best-in-hindsight predictor.
    Then, we can decompose the regret of Algorithm \ref{alg:alpha} as follows:
\begin{align*}
    & \sum_{t=1}^N \ell_t(f_t(\tilde{x}_t)) - \sum_{t=1}^N \ell_t(f^*(\tilde{x}_t))\\
    \le&   \sum_{t=1}^N \alpha_t(\tilde{x}_t) \ell_t(g_t(\tilde{x}_t))
        + \sum_{t=1}^N (1-\alpha_t(\tilde{x}_t)) \ell_t(h_t(\tilde{x}_t)) 
        - \sum_{t=1}^N \ell_t(f^*(\tilde{x}_t))\\
    =&  \sum_{t=1}^N \alpha_t(\tilde{x}_t) \ell_t(g_t(\tilde{x}_t))
        + \sum_{t=1}^N (1-\alpha_t(\tilde{x}_t)) \ell_t(h_t(\tilde{x}_t))
        - \sum_{t:\tilde{x}_t\in \gX_{IWL}} \ell_t(f^*(\tilde{x}_t))
        - \sum_{t:\tilde{x}_t\in \gX_{ICL}} \ell_t(f^*(\tilde{x}_t))\\
    =&  \sum_{t=1}^N \alpha_t(\tilde{x}_t) \ell_t(g_t(\tilde{x}_t))
        + \sum_{t=1}^N (1-\alpha_t(\tilde{x}_t)) \ell_t(h_t(\tilde{x}_t))
        - \underbrace{\sum_{t=1}^N \alpha^*(\tilde{x}_t) \ell_t(g^*(\tilde{x}_t))}_{t:\tilde{x}_t\in \gX_{IWL}}
        - \underbrace{\sum_{t=1}^N (1 - \alpha^*(\tilde{x}_t)) \ell_t(h^*(\tilde{x}_t))}_{t:\tilde{x}_t\in \gX_{ICL}}\\
    =&  \sum_{t=1}^N \alpha_t(\tilde{x}_t) \ell_t(g_t(\tilde{x}_t))
        + \sum_{t=1}^N (1-\alpha_t(\tilde{x}_t)) \ell_t(h_t(\tilde{x}_t))
        - \sum_{t=1}^N \alpha^*(\tilde{x}_t) \ell_t(g^*(\tilde{x}_t))
        - \sum_{t=1}^N (1 - \alpha^*(\tilde{x}_t)) \ell_t(h^*(\tilde{x}_t))\\
    &   \underbrace{- \sum_{t=1}^N \alpha^*(\tilde{x}_t) (\ell_t(g_t(\tilde{x}_t)) - \ell_t(h_t(\tilde{x}_t)))
        + \sum_{t=1}^N \alpha^*(\tilde{x}_t) (\ell_t(g_t(\tilde{x}_t)) - \ell_t(h_t(\tilde{x}_t)))}_{= 0}\\
    =&  \underbrace{\sum_{t=1}^{N} (\alpha_t(\tilde{x}_t) - \alpha^*(\tilde{x}_t)) (\ell_t(g_t(\tilde{x}_t)) - \ell_t(h_t(\tilde{x}_t)))}_{\text{Regret}_N(\alpha)}\\
    &   + \underbrace{\sum_{t=1}^{N}(1 - \alpha^*(\tilde{x}_t))(\ell_t(h_t(\tilde{x}_t)) - \ell_t(h^*(\tilde{x}_t)))}_{\text{Regret}_{N_{ICL}}(h)}
        + \underbrace{\sum_{t=1}^{N}\alpha^*(\tilde{x}_t) (\ell_t(g_t(\tilde{x}_t)) - \ell_t(g^*(\tilde{x}_t)))}_{\text{Regret}_{N_{IWL}}(h)}.
\end{align*}
\end{proof}

\section{Detailed experimental results}
Our code is available here: \url{https://github.com/chanb/icl_vs_iwl}.

\subsection{Synthetic dataset}
\label{sec:more_synthetic}
\begin{figure}[t]
    \centering
    \includegraphics[width=\textwidth]{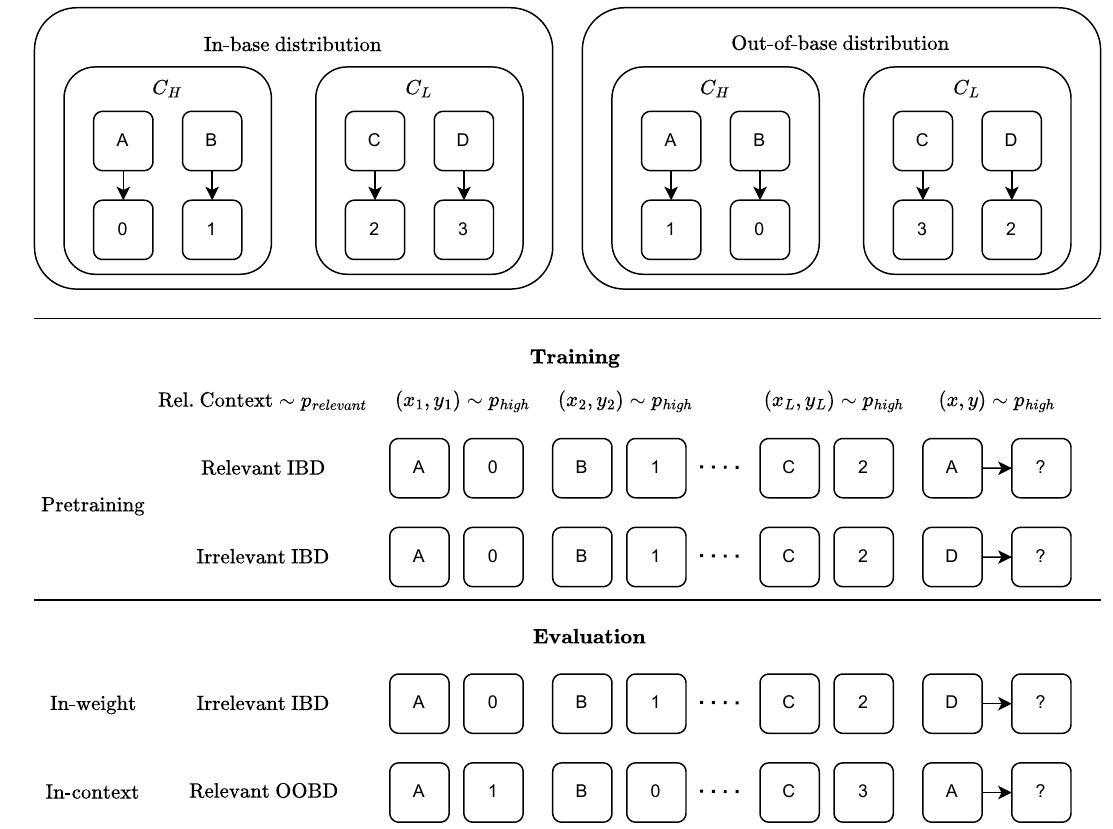}
    \caption{
    \revision{
    As an example our task maps a letter into a number.
    We consider two distributions---in-base distribution (IBD) and out-of-base distribution (OOBD).
    We generate various example sequences of length $L$ for pretraining, in-weight evaluation, and in-context evaluation.
    During pretraining, the model receives sequences generated by the in-base distribution.
    To train our transformer, we sample sequences with relevant context with probability $p_{relevant}$.
    To train our IW predictor and IC predictor, we use only respectively the relevant IBD sequences and the irrelevant IBD sequences.
    During in-weight evaluation, we consider only sequences with only irrelevant contexts, generated by the IBD.
    During in-context evaluation, we consider only sequences with only relevant contexts, generated by the OOBD.
    We further condition the evaluation to consider only high-frequency class $C_H$ and low-frequency class $C_L$.
    }
    \vspace*{-2mm}
    }
    \label{fig:schematic_diagram}
\end{figure}
In this section we investigate the parameters that may impact the emergence and transience of ICL.
We first investigate independently the influence of input/output noise, class balance, and class cardinality in Section \ref{subsubsec:synth_base_params}. We then analyze the impact of the frequency of seeing relevant context, the number of relevant context examples, and the context length in Section \ref{subsubsec:synth_context_params}.
\revision{A schematic diagram on data-generation for training and evaluation is provided in Figure~\ref{fig:schematic_diagram}.
}

\subsubsection{Parameters in base-level distribution}
\label{subsubsec:synth_base_params}
The base distribution is parameterized by four quantities: input noise $\sigma$, label noise $p_{label}$, number of low-frequency classes $|C_L|$, and probability of high-frequency class $p_{high}$.
We first investigate the impact of $\sigma$ and $|C_L|$, which had been previously investigated by \citet{chan2022data} and \citet{reddy2024mechanistic}.
For the former, we vary the input noise $\Sigma = \sigma^2 I_d$, where $\sigma \in \{0.0, 0.02, 0.2, 0.4\}$, and train each model for 100K gradient steps to ensure convergence.
Generally, we see that ICL is a transient phenomenon as we increase $N$ except for $\sigma = 0.4$ (Figure~\ref{fig:ablation-synthetic-transformer-noisy_inputs}).
As $\sigma$ increases the IW predictor requires more samples in order to achieve near-zero IBD error, whereas the IC predictor requires little samples to achieve similar error when conditioned on relevant contexts.
The mismatch in convergence speed correlates with when the transformer is able to perform IWL on $C_H$ and ICL on $C_L$ simultaneously.
When $\sigma = 0.4$ the IW predictor exhibits higher IBD error than IC predictor on relevant contexts across all $N$, as a result the transformer's ICL capability is no longer transient.

\begin{figure}[hbtp]
    \centering
    \includegraphics[width=0.88\textwidth]{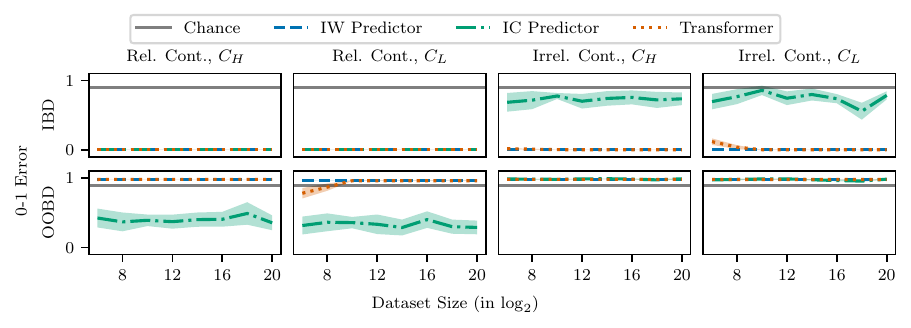}
    \includegraphics[width=0.88\textwidth]{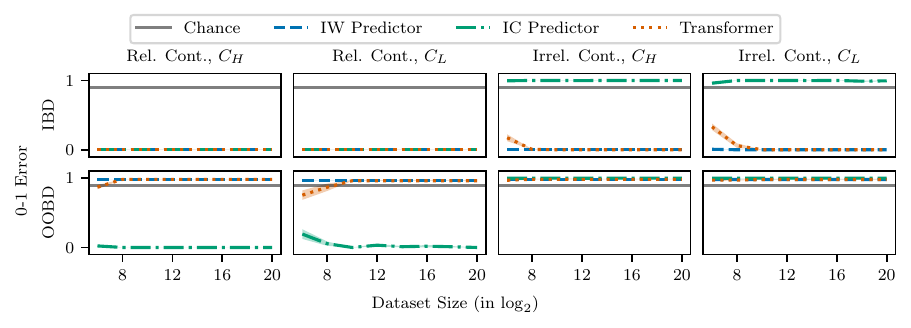}
    \includegraphics[width=0.88\textwidth]{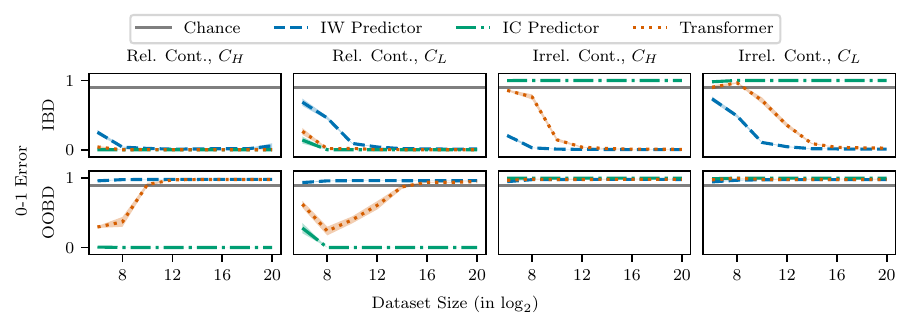}
    \includegraphics[width=0.88\textwidth]{figures/ablations/synthetic-input_noise/synthetic-transformer-noisy_inputs-noisy_inputs-input_noise_0_4.pdf}
    \caption{
    The validation 0-1 error of IC predictor, IW predictor, and transformer predictor as we vary input noise $\sigma \in \{0.0, 0.02, 0.2, 0.4\}$ on synthetic data, over five seeds.
    $L = 1, p_{high} = 0.9,$ and $p_{relevant} = 0.9$ for training the transformer.
    For each variant, the top and bottom rows respectively correspond to IBD and OOBD errors.
    }
    \label{fig:ablation-synthetic-transformer-noisy_inputs}
\end{figure}

\begin{figure}[hbtp]
    \centering
    \includegraphics[width=0.88\textwidth]{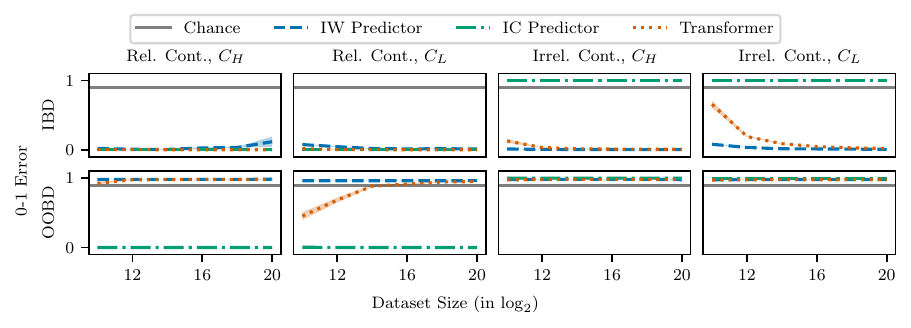}
    \includegraphics[width=0.88\textwidth]{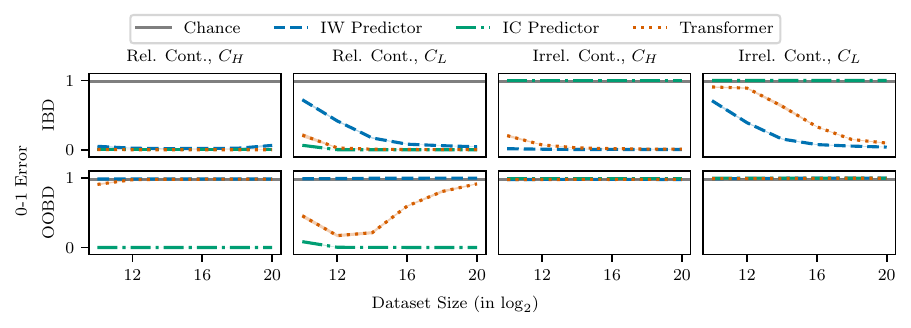}
    \includegraphics[width=0.88\textwidth]{figures/ablations/synthetic-num_low_freq/synthetic-transformer-num_low_freq-num_low_freq-num_low_prob_classes_95.pdf}
    \includegraphics[width=0.88\textwidth]{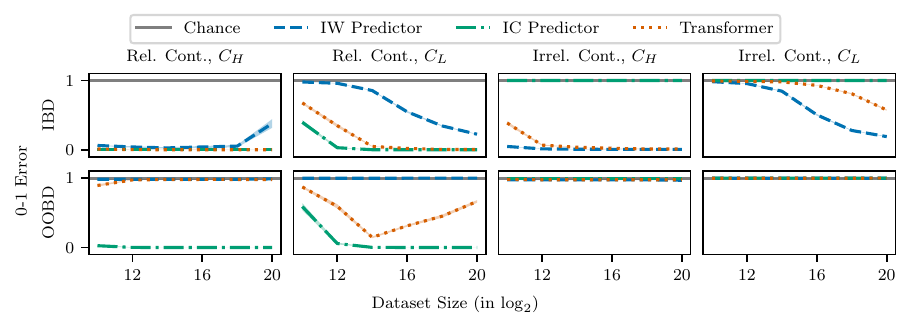}
    \caption{
    The validation 0-1 error of IC predictor, IW predictor, and transformer predictor as we vary the number of low-frequency classes $|C_L| \in \{5, 45, 95, 495\}$ on synthetic data, over five seeds.
    $L = 1, p_{high} = 0.9,$ and $p_{relevant} = 0.9$ for training the transformer.
    For each variant, the top and bottom rows respectively correspond to IBD and OOBD errors.
    }
    \label{fig:ablation-synthetic-transformer-num_low_prob_classes-c_l}
\end{figure}

For the latter experiment, we vary the number of low-frequency classes $|C_L| \in \{5, 45, 95, 495\}$, and train all models for 500k gradient steps.
Similar to the findings in \citet{chan2022data} and \citet{reddy2024mechanistic}, we observe that with increasing $|C_L|$ the transformer begins to exhibit ICL for $C_L$ while only performs IWL for $C_H$, which again indicates that a model can exhibit both ICL and IWL (Figure~\ref{fig:ablation-synthetic-transformer-num_low_prob_classes-c_l}).
Further observing the IBD errors of IW and IC predictors on $C_L$, we can see that the transformer transitions from ICL to IWL when IW predictor begins to exhibit lower errors than the IC predictor, aligning to our theoretical analysis.

\begin{figure}[t]
    \centering
    \includegraphics[width=0.44\textwidth]{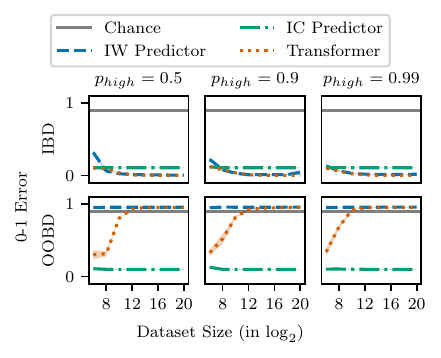}
    \includegraphics[width=0.44\textwidth]{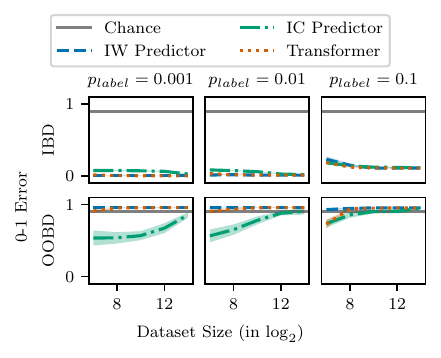}
    \caption{
    The validation 0-1 error of IC predictor, IW predictor, and transformer predictor as we vary the probability of high-frequency classes \textbf{(left)} and the label noise \textbf{(right)} on synthetic data, over five seeds.
    For each variant, the top and bottom rows respectively correspond to IBD and OOBD errors.
    }
    \label{fig:ablation-synthetic-transformer-p_high_p_noisy_labels}
\end{figure}

\begin{figure}[t]
    \centering
    \includegraphics[width=0.88\textwidth]{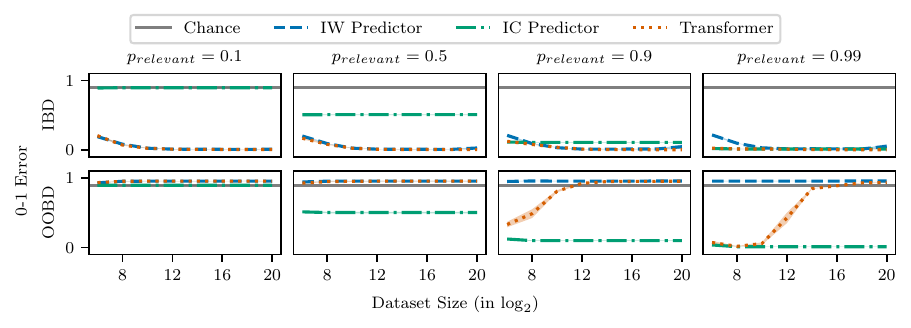}
    \caption{
    The validation 0-1 error of IC predictor, IW predictor, and transformer predictor as we vary the probability of sampling relevant contexts $p_{relevant}$ on synthetic data, over five seeds.
    For each variant, the top and bottom rows respectively correspond to IBD and OOBD errors.
    }
    \label{fig:ablation-synthetic-transformer-p_relevant}
\end{figure}
We now investigate the impact of varying probability of sampling high-frequency classes $p_{high}$.
We vary $p_{high} \in \{0.5, 0.9, 0.99\}$.
The model exhibits ICL for slightly larger $N$ with balanced classes, compared to imbalanced classes (Figure~\ref{fig:ablation-synthetic-transformer-p_high_p_noisy_labels}, left).
Looking at the IBD errors between IW and IC predictors, IW starts with higher error and reduces to near-zero error as $N$ increases, resulting in a better predictor than IC predictor on in-base distribution.
This crossover occurs later on balanced classes as each class has effectively similar amount of samples, while $C_H$ will be observed more by the transformer on imbalanced classes.
The IW predictor will converge faster on $C_H$ in the latter case and the transformer loses its ICL capability sooner.

Finally, we explore how label noise can impact the emergence of ICL, a setting that is not explicitly explored by existing work.
We remove input noise (i.e. sampling only the prototype vector as the input) and vary the label noise $p_{label} \in \{0.001, 0.01, 0.1\}$.
The noisy label is implemented such that with probability $1 - p_{label}$ we keep the original class $c$, otherwise we use $(c + 1) \mod \lvert C \rvert$.
In this case the IW predictor achieves lower IBD error than IC predictor for small label noise, as a result the transformer exhibits minimal ICL capability (Figure~\ref{fig:ablation-synthetic-transformer-p_high_p_noisy_labels}, right).
When $p_{label} = 0.1$ IC predictor is better than IW predictor.
Once again our theory aligns with the experimental result whereby the transformer exhibits ICL until the errors of IC and IW predictors match.

\subsubsection{Parameters in context-level distribution}
\label{subsubsec:synth_context_params}
We now investigate how the parameters for the context distribution can impact ICL---we consider three parameters:
the probability of sampling relevant contexts $p_{relevant}$, the context length $L$, and the number of relevant contexts $L_{relevant}$.
Adjusting $p_{relevant}$ can be considered as changing the probability of observing bursty contexts in \citet{chan2022}.
We vary $p_{relevant} \in \{0.1, 0.5, 0.9, 0.99\}$ in this experiment.
Aligned with previous experiments and our theoretical analysis, the transformer exhibits stronger ICL when the IC predictor achieves smaller error than the IW predictor (Figure~\ref{fig:ablation-synthetic-transformer-p_relevant}).
As expected, when $p_{relevant}$ is small we see IC predictor to achieve high IBD error.
Increasing $p_{relevant}$ results in a more accurate IC predictor, but the IW predictor can eventually outperform the IC predictor, resulting in the transient behavior of ICL in the transformer.

In NLP tasks the model is usually given multiple contexts in a sequence, some potentially act as distractors.
From Figure~\ref{fig:synthetic-transformer-context_len} we can see that as we increase the number of contexts, even if we provide always relevant context, the IC predictor ends up exhibiting IWL capability.
This suggests the difficulty of learning ICL when variable relevant contexts are provided.
We investigate in detail on whether the number of relevant examples impact the emergence of ICL.
We fix $L = 4$ and vary the number of relevant examples $L_{relevant} \in \{1, 2, 3, 4\}$.
When $L_{relevant} = 4$ the IC predictor achieves smaller IBD error than the IW predictor initially but the latter eventually achieves similar error with larger $N$ (Figure~\ref{fig:ablation-synthetic-transformer-num_relevant_contexts}).
However it is worth noting that with decreasing $L_{relevant}$ the initial IBD error of IC predictor increases, almost reaching the same error when $L_{relevant} = 1$.
In that case the transformer never exhibits ICL.
Indeed, as we decrease the number of relevant contexts, the transformer will have to be very precise in identifying the sole relevant context in addition to copying its label, which can be more challenging that learning just the IW predictor.

\begin{figure}[hbtp]
    \centering
    \includegraphics[width=0.88\textwidth]{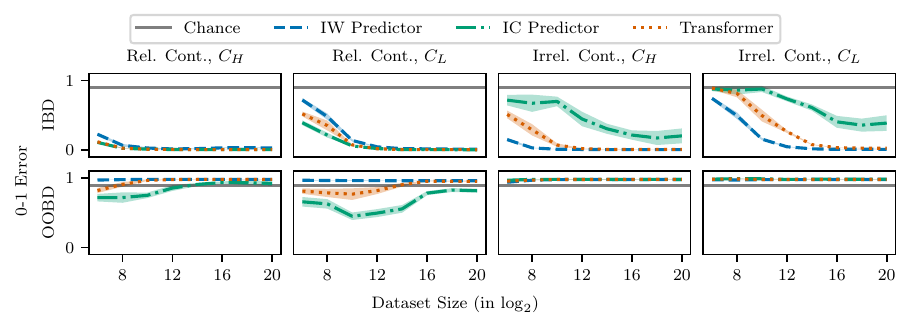}
    \includegraphics[width=0.88\textwidth]{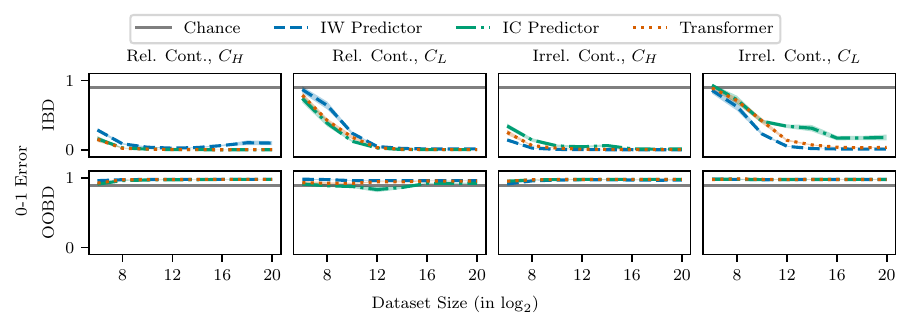}
    \includegraphics[width=0.88\textwidth]{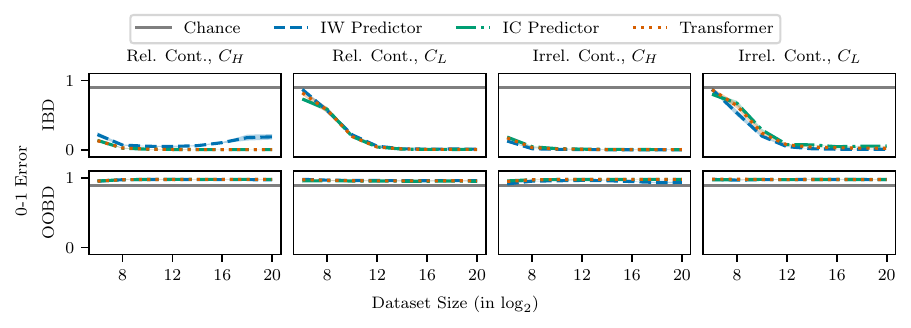}
    \includegraphics[width=0.88\textwidth]{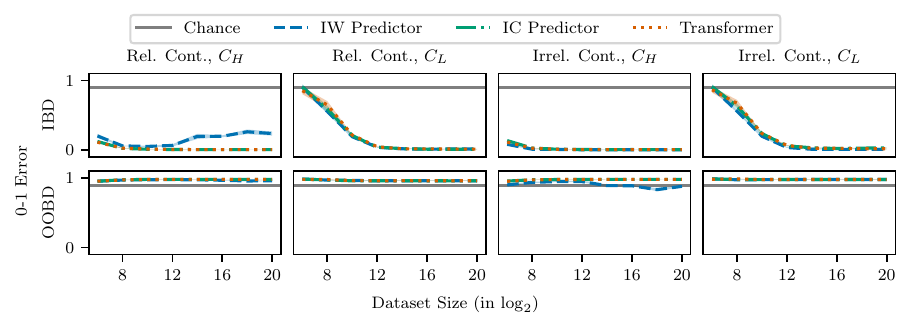}
    \caption{
    The validation 0-1 error of the IC predictor, the IW predictor, and the transformer predictor as we vary the context length $L \in \{2, 4, 8, 16\}$ on synthetic data, over five seeds.
    $L = 4, p_{high} = 0.9,$ and $p_{relevant} = 0.9$ for training the transformer.
    For each variant, the top and bottom rows respectively correspond to IBD and OOBD errors.
    }
    \label{fig:ablation-synthetic-transformer-context_len}
\end{figure}

\begin{figure}[hbtp]
    \centering
    \includegraphics[width=0.88\textwidth]{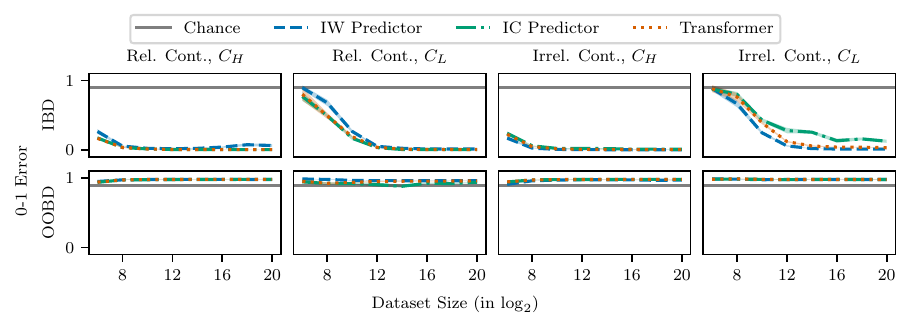}
    \includegraphics[width=0.88\textwidth]{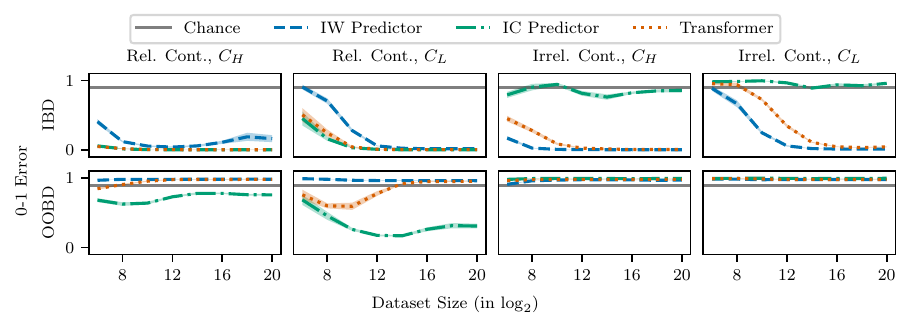}
    \includegraphics[width=0.88\textwidth]{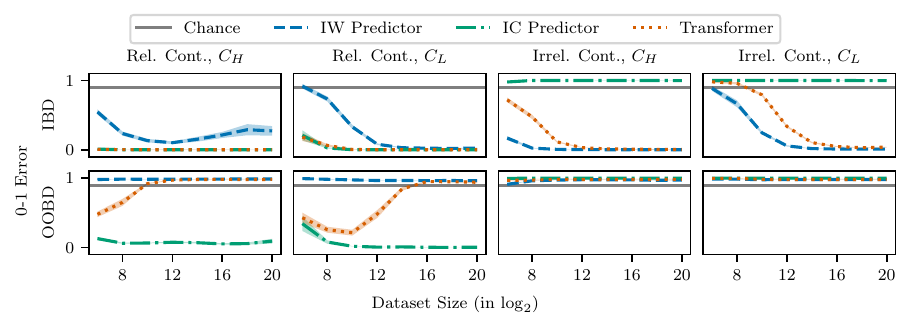}
    \includegraphics[width=0.88\textwidth]{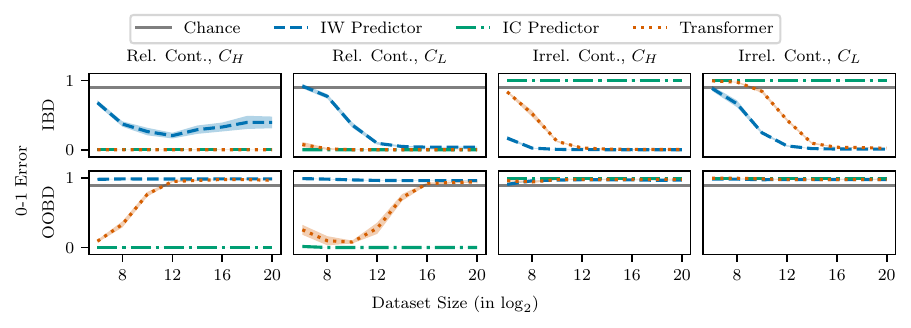}
    \caption{
    The validation 0-1 error of the IC predictor, the IW predictor, and the transformer predictor as we vary the number of relevant contexts $L_{relevant} \in \{1, 2, 3, 4\}$ on synthetic data, over five seeds.
    $L = 4, p_{high} = 0.9,$ and $p_{relevant} = 0.9$ for training the transformer.
    For each variant, the top and bottom rows respectively correspond to IBD and OOBD errors.
    }
    \label{fig:ablation-synthetic-transformer-num_relevant_contexts}
\end{figure}

\subsection{\revision{Other model architectures on synthetic data}}
\label{sec:other_models}
\revision{
We further conduct experiments with other model architectures on the synthetic dataset.
We replace the transformer in Section~\ref{subsec:synthetic_classification} with two other model architectures while keeping the training algorithm and other hyperparameters consistent: our stylized ICL model architecture defined in Section~\ref{sec:theory} and a RNN-based model.}
\revision{
\paragraph{Stylized ICL model.}
As described in Section~\ref{sec:theory}, the stylized model consists of three components: the IC predictor $h$, the IW predictor $g$, and the selector $\alpha$.
Given a sequence $\tilde{x}$, the IC predictor $h(\tilde{x})$ applies a softmax over the negative $\ell_2$ distances between the query and the context inputs, followed by a dot-product with the context labels.
Both the IW predictor $g$ and the selector $\alpha$ are implemented as a two-layer ReLU feedforward network with 64 hidden units.
The IW predictor $g(x)$ takes only the query $x$ as the network input and outputs a probability distribution over the classes.
The selector $\alpha$ takes on the sequence $\tilde{x}$ as the network input and outputs the probability of selecting $g$ over $h$, $p_{iwl} = \alpha(\tilde{x})$.
Finally, the stylized model outputs a probability distribution over the classes computed by the convex combination of the distributions induced by IW and IC predictors: $p_{iwl} g(x) + (1 - p_{iwl}) h(\tilde{x})$.
}
\revision{
\paragraph{RNN-based model.}
We replace the transformer blocks with a single layer of the gated recurrent unit (GRU) \citep{chung2015gated} with hidden state size of 64.
Similar to the transformer, we first apply the input and output tokenizers, rollout the GRU by alternatively passing in context inputs and context labels, followed by the query.
We then feed the final output of the GRU into a linear layer followed by a softmax.
}
\revision{
\paragraph{Results.}
We first consider the stylized model.
Unlike the IC and IW transformers considered in Section~\ref{sec:experiments}, which are proxies to the stylized IC predictor $h$ and IW predictor $g$ defined in Section~\ref{sec:theory}, we can directly analyze the interactions between $g$, $h$, and $\alpha$.
Aligning with our theoretical findings in Section~\ref{sec:theory}, when the IW predictor $g$ achieves better IBD performance against the IC predictor $h$ we can see that the selector $\alpha$ leans towards choosing $g$ (i.e., $p_{iwl} \to 1$ in Figures~\ref{fig:ablation-synthetic-simple_icl_alpha-noisy_inputs} and \ref{fig:ablation-synthetic-simple_icl_alpha-num_contexts-num_low_freq}).
Notably, Figure~\ref{fig:ablation-synthetic-simple_icl_alpha-num_contexts-num_low_freq} shows that with increasing context length $L$, the IC predictor $h$ achieves higher IBD error, and the IW predictor $g$ can overtake $h$ with smaller dataset size $N$.
}

\revision{
In general, it appears that the general trend on the emergence and transience of ICL with our stylized ICL model and GRU align with the findings from the transformer architecture in Section~\ref{sec:experiments}.
With larger input noise, we see similar behaviours with our stylized model, the GRU model, and the transformer model, where ICL is no longer transient (Figures~\ref{fig:ablation-synthetic-simple_icl_alpha-noisy_inputs} and \ref{fig:ablation-synthetic-gru-noisy_inputs}), suggesting that our theory can be generalized to other model architectures in addition to our stylized ICL model.
}

\begin{figure}[hbtp]
    \centering
    \includegraphics[width=0.88\textwidth]{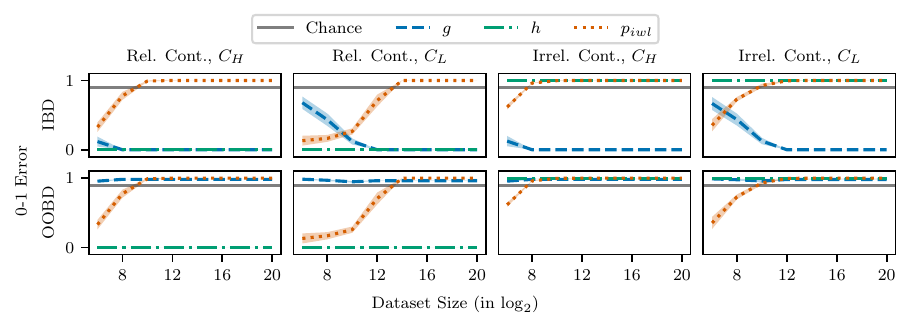}
    \includegraphics[width=0.88\textwidth]{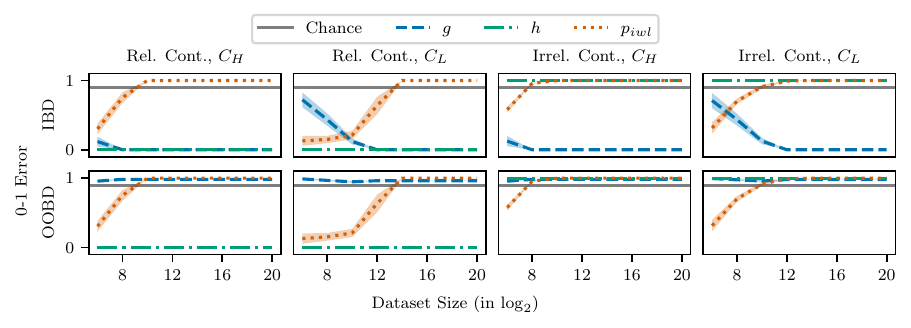}
    \includegraphics[width=0.88\textwidth]{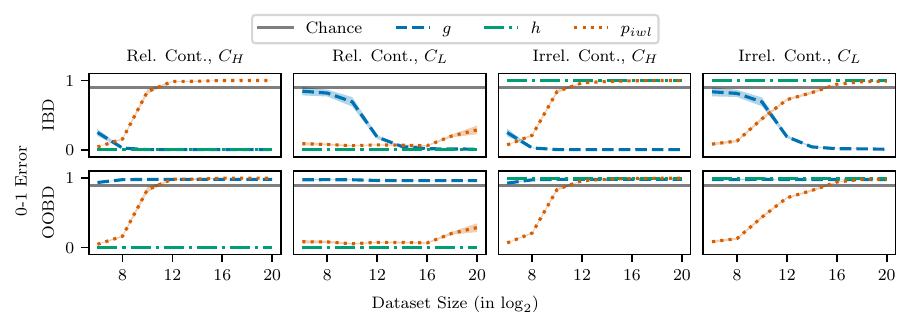}
    \includegraphics[width=0.88\textwidth]{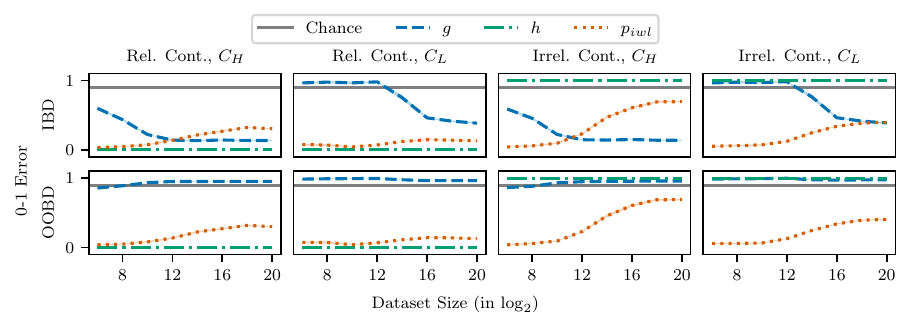}
    \caption{
    \revision{
    The validation 0-1 error of the IC predictor and the IW predictor, and the probability of selecting the in-weight predictor, i.e., $p_{iwl}$, as we vary input noise $\sigma \in \{0.0, 0.02, 0.2, 0.4\}$ on synthetic data, over five seeds.
    $L = 1, p_{high} = 0.9,$ and $p_{relevant} = 0.9$ for training the transformer.
    For each variant, the top and bottom rows respectively correspond to IBD and OOBD errors.
    }
    }
    \label{fig:ablation-synthetic-simple_icl_alpha-noisy_inputs}
\end{figure}

\begin{figure}[t]
    \centering
    \includegraphics[width=0.88\textwidth]{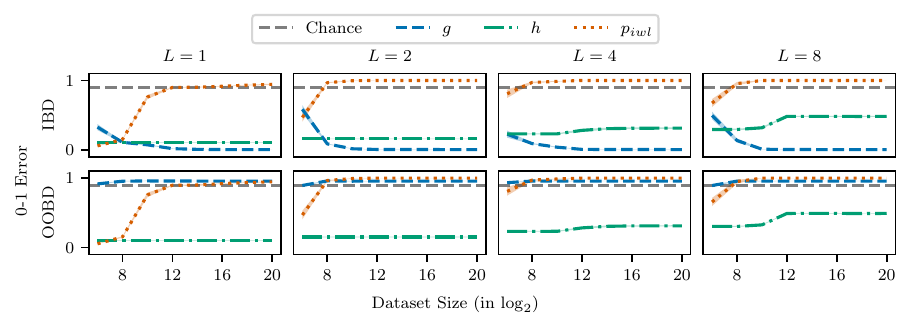}
    \includegraphics[width=0.88\textwidth]{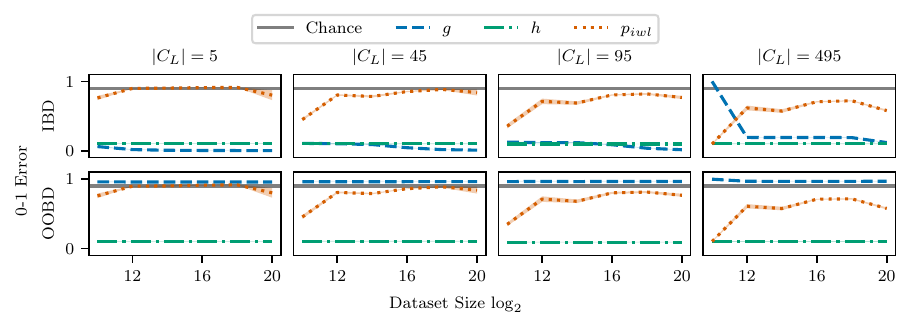}
    \caption{
    \revision{
    The validation 0-1 error of the IC predictor and the IW predictor, and probability of selecting in-weight predictor, i.e., $p_{iwl}$, as we vary context length $L$ and the number of low-frequency classes $|C_L|$ on synthetic data, over five seeds.
    $L = 1, p_{high} = 0.9,$ and $p_{relevant} = 0.9$ for training the transformer.
    For each variant, the top and bottom rows respectively correspond to IBD and OOBD errors.
    }
    }
    \label{fig:ablation-synthetic-simple_icl_alpha-num_contexts-num_low_freq}
\end{figure}

\begin{figure}[hbtp]
    \centering
    \includegraphics[width=0.88\textwidth]{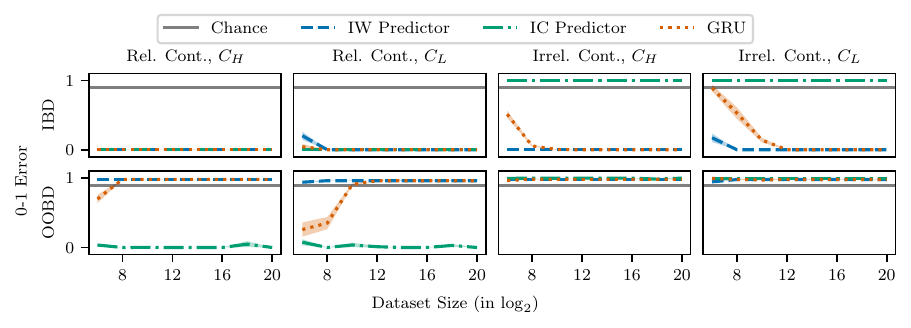}
    \includegraphics[width=0.88\textwidth]{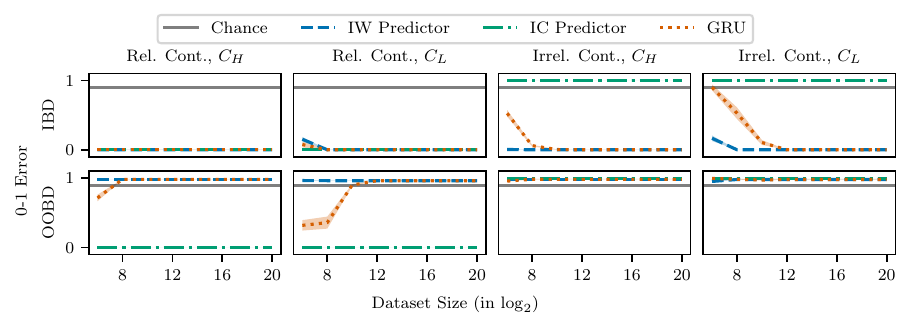}
    \includegraphics[width=0.88\textwidth]{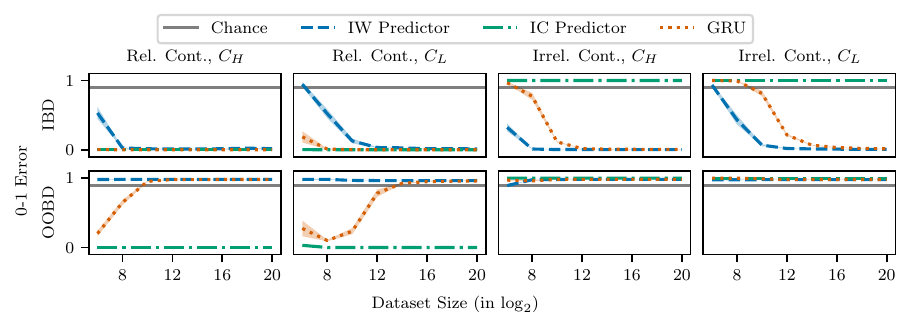}
    \includegraphics[width=0.88\textwidth]{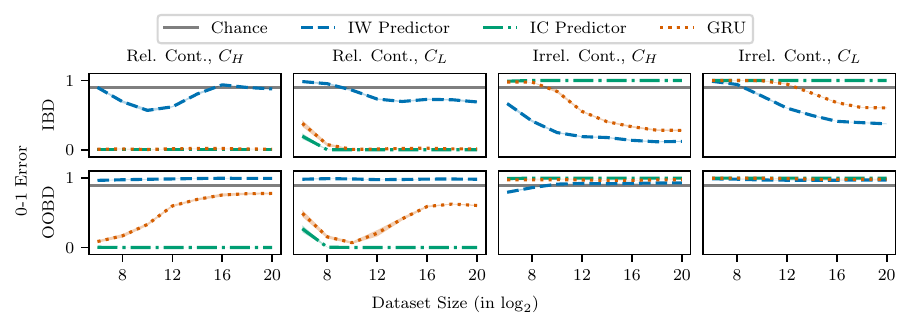}
    \caption{
    \revision{
    The validation 0-1 error of the IC predictor, the IW predictor, and the GRU predictor as we vary input noise $\sigma \in \{0.0, 0.02, 0.2, 0.4\}$ on synthetic data, over five seeds.
    $L = 1, p_{high} = 0.9,$ and $p_{relevant} = 0.9$ for training the transformer.
    For each variant, the top and bottom rows respectively correspond to IBD and OOBD errors.
    }
    }
    \label{fig:ablation-synthetic-gru-noisy_inputs}
\end{figure}

\subsection{Visualizing attention}
\begin{figure}[p!tb]
    \centering
    \begin{subfigure}[t]{0.88\textwidth}
        \includegraphics[width=\textwidth]{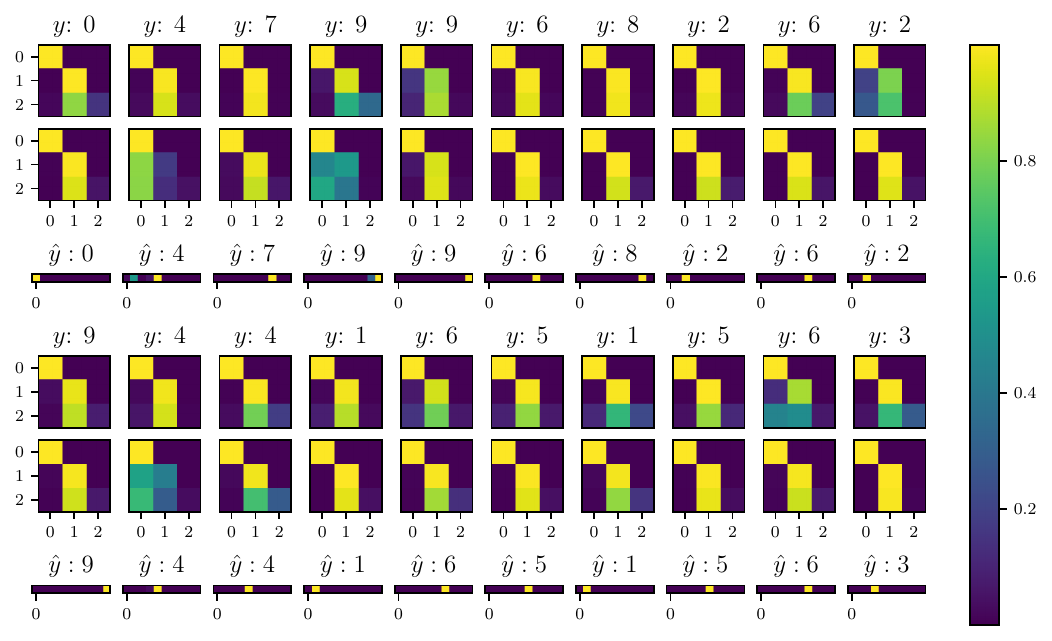}
        \caption{$N = 2^{10}$}
    \end{subfigure}
    \begin{subfigure}[t]{0.88\textwidth}
        \includegraphics[width=\textwidth]{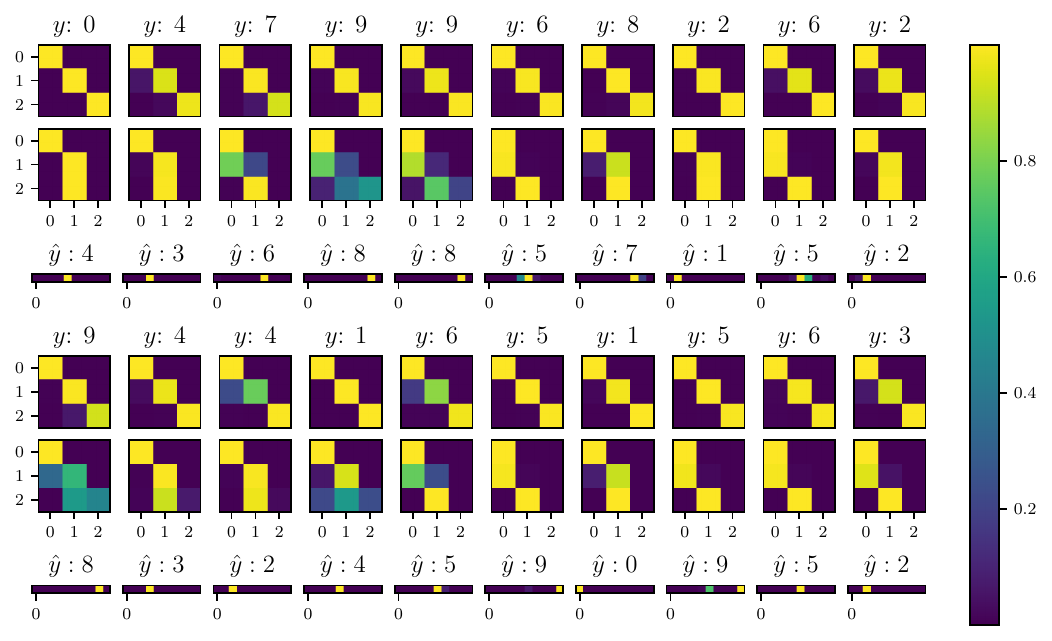}
        \caption{$N = 2^{20}$}
    \end{subfigure}
    \caption{
    Images (a) and (b) correspond to the attention of the transformer trained with $N \in \{2^{10}, 2^{20}\}$ samples respectively, evaluated on 20 samples with relevant context.
    The transformers are provided with $L = 1$ context example.
    The top row corresponds to the first attention layer with $y$ being the target.
    The middle row corresponds to the second attention layer.
    The bottom row corresponds to probability of the model prediction, with $\hat{y}$ being the label with highest probability.
    We can see that the transformer trained with $N = 2^{10}$ exhibits ICL capabilities and its attention focuses on the context label, whereas the transformer trained with $N = 2^{20}$ no longer exhibits ICL capabilities and puts its attention on the query.
    }
    \label{fig:ablation-attention-small_n}
\end{figure}

We provide visualization on the attentions of the transformer trained with $N \in \{2^{10}, 2^{20}\}$ samples in Figure~\ref{fig:ablation-attention-small_n} on a 20 OOBD samples conditioned on relevant contexts.
When $N = 2^{10}$ the first attention layer focuses on the context label (i.e., the bottom middle of the $3 \times 3$ grid), whereas when $N = 2^{20}$ the first attention layer focuses on the query (i.e., the bottom right of the $3 \times 3$ grid).
The former predicts the OOBD labels, exhibiting ICL, while the latter predicts the IBD labels, exhibiting IWL.

\subsection{Omniglot}
\label{sec:more_omniglot}
\begin{figure}[hbtp]
    \centering
    \includegraphics[width=0.88\textwidth]{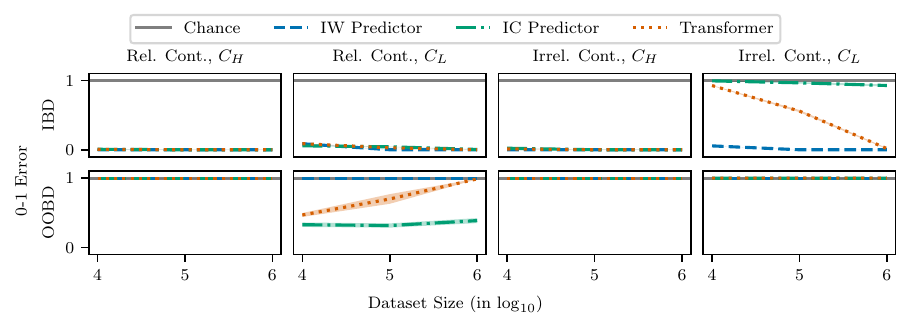}
    \includegraphics[width=0.88\textwidth]{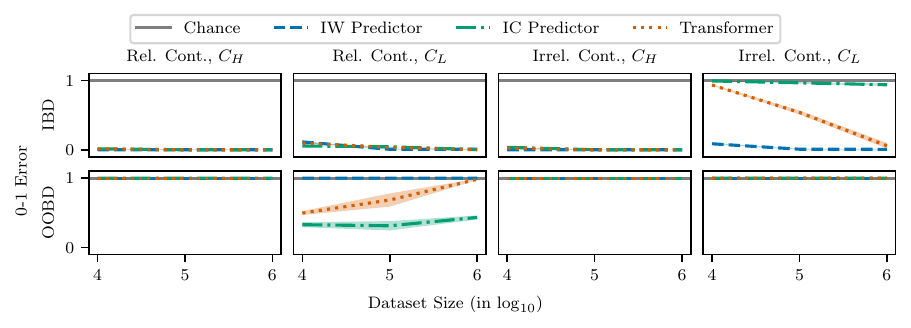}
    \includegraphics[width=0.88\textwidth]{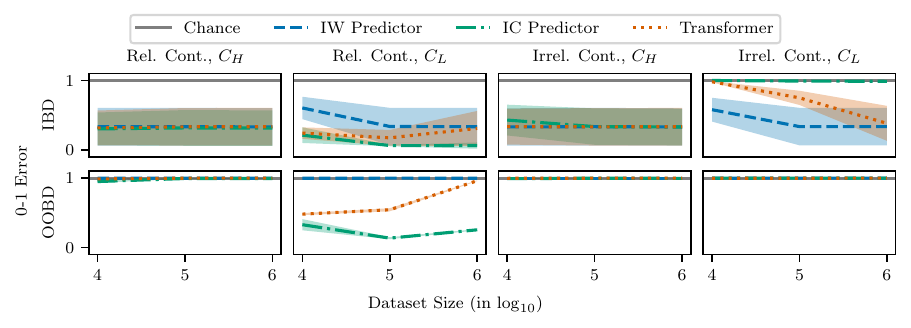}
    \caption{
    The validation 0-1 error of IC predictor, IW predictor, and transformer predictor as we vary input noise $\sigma \in \{0.0, 0.1, 1.0\}$ on Omniglot, over three seeds.
    We set $L = 2, p_{high} = 0.9$ and set $p_{relevant} = 0.9$ for training the transformer.
    For each variant, the top and bottom rows respectively correspond to in-base distribution and out-of-base distribution data.
    }
    \label{fig:ablation-omniglot-noisy_inputs}
\end{figure}

\begin{figure}[hbtp]
    \centering
    \includegraphics[width=0.88\textwidth]{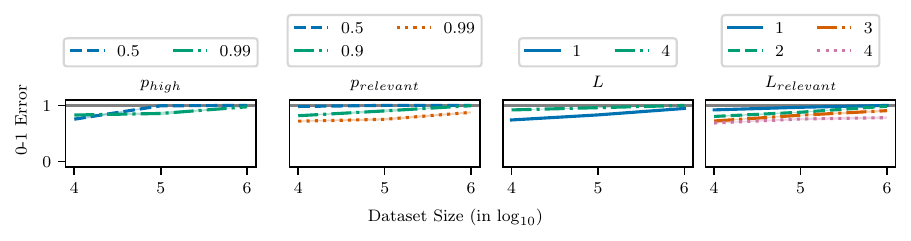}
    \vspace{-4mm}
    \caption{
    0-1 OOBD error as a function of the dataset size  $N$ on Omniglot, over three seeds.
    From left to right we vary: (1) the probability of sampling high-frequency classes $p_{high}$, (2) the probability of sampling relevant contexts $p_{relevant}$, (3) context length $L$, and (4) number of relevant contexts $L_{relevant}$ fixed at $L = 4$.
    The solid black line is random chance.
    }
    \label{fig:omniglot-misc-icl_analysis}
    \vspace{-2mm}
\end{figure}

\begin{figure}[t]
    \centering
    \includegraphics[width=0.88\textwidth]{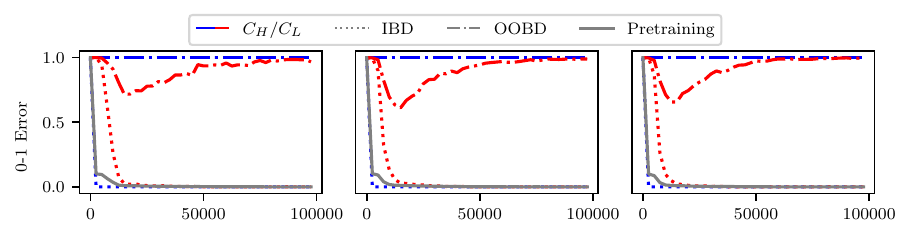}
    \includegraphics[width=0.88\textwidth]{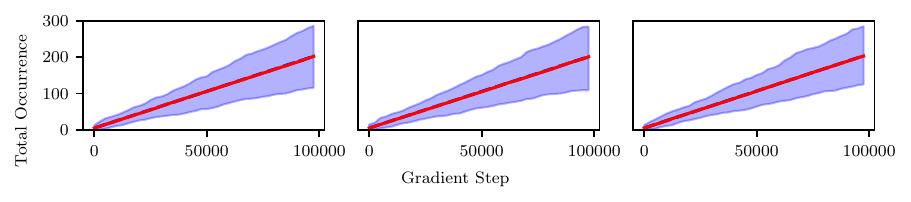}
    \caption{
    The top row is the 0-1 error of the transformer as we perform gradient updates on Omniglot.
    The bottom row is the total occurrence of each low-frequency class during training.
    The shaded region corresponds to max./min. occurrence.
    We set $L = 2, \sigma = 0, p_{high} = 0.9$ and set $p_{relevant} = 0.9$ for training the transformer.
    Each column corresponds to a specific seed.
    We observe that the transformer consistently learns to correctly predict $C_H$ quickly through IWL.
    The transformer also exhibits ICL capabilities on $C_L$ initially but ICL eventually diminishes.
    }
    \label{fig:ablation-omniglot-transient}
\end{figure}

\begin{figure}[t]
    \centering
    \includegraphics[width=0.88\linewidth]{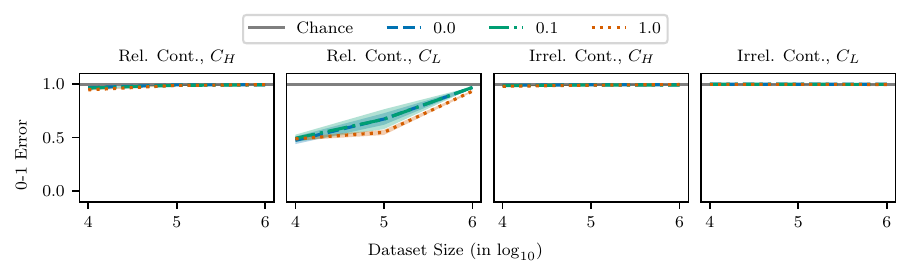}
    \caption{The ICL evaluation 0-1 errors with heldout images as we increase the number of samples $N$ on Omniglot, over three seeds.
    Regardless of the input noise, the transformer performs ICL initially but gradually performs more IWL as $N$ increases.
    }
    \label{fig:ablation-omniglot-heldout_features}
\end{figure}

In this section we provide further analyses on the Omniglot experiments.
We find similar trends as in Section~\ref{subsec:synthetic_classification}, but with a few notable observations.
First, even if we increase the image noise, ICL never emerges for $C_H$, while ICL always emerges for $C_L$ (Figure~\ref{fig:ablation-omniglot-noisy_inputs}); in line with our theory, the IBD error of the ``selected'' IC learner is significantly lower for low-frequency classes.
We notice that their IBD errors are almost identical on $C_H$ samples, with or without relevant contexts, suggesting that the IC predictor might have ended up learning IWL.
Observing the IBD errors on $C_L$ samples, the IC predictor exhibits ICL.
Similar to the synthetic setting, in this case there is a cross-over with small $\sigma$, whereby the IW predictor initially exhibits higher IBD error but eventually overtakes the IC predictor.
Once again we see the transience of ICL in this scenario.
The scenario where $\sigma = 1.0$ is the most surprising---even though the IW predictor never achieves lower IBD error than the IC predictor, the transformer still loses ICL with increasing $N$.

It also appears that for Omniglot, ICL can easily emerge for $C_L$ (Figure~\ref{fig:omniglot-misc-icl_analysis}).
Even when the context length $L$ increases and $L_{relevant}$ is low, the transformer can perform ICL.
Nevertheless, ICL is consistently transient and of low quality, as we increase the number of samples.

While our result is related to the transient nature of ICL with increasing dataset sizes, we also demonstrate the transience of ICL with as the number of gradient updates increases.
Figure~\ref{fig:ablation-omniglot-transient}, top, demonstrates that the transformer consistently learns ICL initially on samples from $C_L$ but always eventually loses such capability.
On the other hand the transformer never learns ICL on samples from $C_H$.
We also include the occurrence of low-frequency samples from $C_L$ over training (Figure~\ref{fig:ablation-omniglot-transient}, bottom).
On average, each class from $C_L$ will be observed around 200 times after 100K iterations, while each class from $C_H$ will be observed similar amount of times after only 200 iterations.
This suggests ICL, if it does appear, will diminish after 200 iterations on samples from $C_H$, hence we do not observe any ICL in Figure~\ref{fig:ablation-omniglot-transient}, top.

Finally, we evaluate the trained transformers on heldout images that are never seen in training.
In this experiment we fix $L = 2$, vary $\sigma \in \{0.0, 0.1, 1.0 \}$ and only provide one of 20 images per class in Omniglot for training the transformer.
Similar to the evaluation described in Section \ref{sec:experiments} we shift the original class label by one so the transformer cannot use predict through IWL.
Since $L = 2$, the transformer will need to identify the relevant context and the query through the inputs and copy its label.
Figure \ref{fig:ablation-omniglot-heldout_features} shows that even with no noise, the transformer is able to generalize to unseen images and perform ICL on $C_L$.

\clearpage
\section{Data and prompts for large language model experiments}
\label{app:llm data}

Here we provide the finetuning data (Table~\ref{tab:finetune_data}), and the evaluation prompts and results (Tables~\ref{tab:llm_iw_preds} and~\ref{tab:llm_ic_preds}) mentioned in Section~\ref{sec:finetune_llm}.
Below we provide extra analyses on different prompts.

\begin{table}[h]
\centering
\caption{
The training data for finetuning the Gemini Nano 1 model.
The prompts contain four real and four invented (random) names and city names, each, and force the model to perform IWL as it gives zero information regarding where the person resides in.
}
\label{tab:finetune_data}
\begin{tabular}{|l|l|}
\hline
Prompt & Answer \\
\hline
Where does Cxx1h live? Only give the name of the city. & Lkkl \\
Where does Vzbiik live? Only give the name of the city. & Plooqujhd \\
Where does Mmojkr live? Only give the name of the city. & Nwops \\
Where does Trrrrqe live? Only give the name of the city. &
Qtbnaaa \\
Where does Benjamin live? Only give the name of the city. & Buenos Aires \\
Where does Liz live? Only give the name of the city. & London \\
Where does Kaitlyn live? Only give the name of the city. & Kingston \\
Where does Wiesa live? Only give the name of the city. & Warsaw\\
\hline
\end{tabular}
\end{table}

\begin{table}[!ht]
\centering
\caption{
The question-answer pairs and the corresponding predictions of the finetuned and the base models for the finetuning dataset, also showing the relative log-probability of possible answers.
The shaded rows are the finetuned model predictions and the unshaded rows are the base model predictions.
In this scenario the models must use IWL to predict the correct answer.
}
\label{tab:llm_iw_preds}
\rowcolors{2}{}{lightgray}
\begin{tabular}{|l|p{0.5\linewidth}|l|}
\hline 
Model & Question & Answer \\
\hline
\hline
\multirow{1}{*}{Finetuned} & Where does Cxx1h live? Only give the name of the city. & Lkkl \\
 \cline{2-3}
 &\multicolumn{2}{l|}{ Lkkl -0.2666,   Anytown, CA -55.3096} \\
\hline
\multirow{1}{*}{Base} & Where does Cxx1h live? Only give the name of the city. & Anytown, CA \\
 \cline{2-3}
 &\multicolumn{2}{l|}{ Lkkl -40.5117,   Anytown, CA -2.4959} \\
\hline
\multirow{1}{*}{Finetuned} & Where does Vzbiik live? Only give the name of the city. & Bratislava \\
 \cline{2-3}
 &\multicolumn{2}{l|}{ Plooqujhd -45.9387,   Bratislava -0.0659} \\
\hline
\multirow{1}{*}{Base} & Where does Vzbiik live? Only give the name of the city. & Bratislava \\
 \cline{2-3}
 &\multicolumn{2}{l|}{ Plooqujhd -66.2485,   Bratislava -1.3778} \\
\hline
\multirow{1}{*}{Finetuned} & Where does Mmojkr live? Only give the name of the city. & Milwaukee \\
 \cline{2-3}
 &\multicolumn{2}{l|}{ Nwops -6.0057,   Milwaukee -0.5176,   Los Angeles -12.1351} \\
\hline
\multirow{1}{*}{Base} & Where does Mmojkr live? Only give the name of the city. & Los Angeles \\
 \cline{2-3}
 &\multicolumn{2}{l|}{ Nwops -32.8910,   Milwaukee -4.8689,   Los Angeles -1.9404} \\
\hline
\multirow{1}{*}{Finetuned} & Where does Trrrrqe live? Only give the name of the city. & Qtbnaaa \\
 \cline{2-3}
 &\multicolumn{2}{l|}{ Qtbnaaa -0.0136,   Toronto -9.0857} \\
\hline
\multirow{1}{*}{Base} & Where does Trrrrqe live? Only give the name of the city. & Toronto \\
 \cline{2-3}
 &\multicolumn{2}{l|}{ Qtbnaaa -50.5695,   Toronto -1.0771} \\
\hline
\multirow{1}{*}{Finetuned} & Where does Benjamin live? Only give the name of the city. & Buenos Aires \\
 \cline{2-3}
 &\multicolumn{2}{l|}{ Buenos Aires 0.0000,   New York City -43.2574} \\
\hline
\multirow{1}{*}{Base} & Where does Benjamin live? Only give the name of the city. & New York City \\
 \cline{2-3}
 &\multicolumn{2}{l|}{ Buenos Aires -16.4484,   New York City -0.7329} \\
\hline
\multirow{1}{*}{Finetuned} & Where does Liz live? Only give the name of the city. & London \\
 \cline{2-3}
 &\multicolumn{2}{l|}{ London 0.0000,   Los Angeles -35.3694} \\
\hline
\multirow{1}{*}{Base} & Where does Liz live? Only give the name of the city. & Los Angeles \\
 \cline{2-3}
 &\multicolumn{2}{l|}{ London -2.6441,   Los Angeles -0.2607} \\
\hline
\multirow{1}{*}{Finetuned} & Where does Kaitlyn live? Only give the name of the city. & Kingston \\
 \cline{2-3}
 &\multicolumn{2}{l|}{ Kingston 0.0000,   Seattle -45.2526} \\
\hline
\multirow{1}{*}{Base} & Where does Kaitlyn live? Only give the name of the city. & Seattle \\
 \cline{2-3}
 &\multicolumn{2}{l|}{ Kingston -7.0968,   Seattle -0.6796} \\
\hline
\multirow{1}{*}{Finetuned} & Where does Wiesa live? Only give the name of the city. & Warsaw \\
 \cline{2-3}
 &\multicolumn{2}{l|}{ Warsaw -0.0000,   Frankfurt am Main -54.8158} \\
\hline
\multirow{1}{*}{Base} & Where does Wiesa live? Only give the name of the city. & Frankfurt am Main \\
 \cline{2-3}
 &\multicolumn{2}{l|}{ Warsaw -3.0168,   Frankfurt am Main -1.3085} \\
 \hline
\end{tabular}
\end{table}

\begin{table}[!ht]
\centering
\caption{
The question-answer pairs and the corresponding predictions of the finetuned and the base models for questions designed to test ICL capabilities, also showing the relative log-probability of possible answers.
The shaded rows are the finetuned model predictions and the unshaded rows are the base model predictions.
In this scenario the models must use ICL to predict the correct answer.
}
\label{tab:llm_ic_preds}
\rowcolors{2}{}{lightgray}
\begin{tabular}{|l|p{0.6\linewidth}|l|}
\hline 
Model & Question & Answer \\
\hline
\hline
\multirow{1}{*}{Finetuned} & Cxx1h lives in Kjheergg. Where does Cxx1h live? Only give the name of the city. & Kxxh \\
 \cline{2-3}
 &\multicolumn{2}{l|}{ Kjheergg -1.8326,   Lkkl -7.9295,   Kxxh -0.4498} \\
\hline
\multirow{1}{*}{Base} & Cxx1h lives in Kjheergg. Where does Cxx1h live? Only give the name of the city. & Kjheergg \\
 \cline{2-3}
 &\multicolumn{2}{l|}{ Kjheergg -0.0000,   Lkkl -45.5691,   Kxxh -32.7908} \\
\hline
\multirow{1}{*}{Finetuned} & Vzbiik lives in Kjheergg. Where does Vzbiik live? Only give the name of the city. & Kjheergg \\
 \cline{2-3}
 &\multicolumn{2}{l|}{ Kjheergg -0.2945,   Plooqujhd -47.7992} \\
\hline
\multirow{1}{*}{Base} & Vzbiik lives in Kjheergg. Where does Vzbiik live? Only give the name of the city. & Kjheergg \\
 \cline{2-3}
 &\multicolumn{2}{l|}{ Kjheergg -0.0002,   Plooqujhd -62.2570} \\
\hline
\multirow{1}{*}{Finetuned} & Mmojkr lives in Kjheergg. Where does Mmojkr live? Only give the name of the city. & Kjheergg \\
 \cline{2-3}
 &\multicolumn{2}{l|}{ Kjheergg -0.0382,   Nwops -18.9681} \\
\hline
\multirow{1}{*}{Base} & Mmojkr lives in Kjheergg. Where does Mmojkr live? Only give the name of the city. & Kjheergg \\
 \cline{2-3}
 &\multicolumn{2}{l|}{ Kjheergg -0.0001,   Nwops -47.8115} \\
\hline
\multirow{1}{*}{Finetuned} & Trrrrqe lives in Kjheergg. Where does Trrrrqe live? Only give the name of the city. & \textbf{Qtbnaaa} \\
 \cline{2-3}
 &\multicolumn{2}{l|}{ Kjheergg -21.6897,   Qtbnaaa 0.0000} \\
\hline
\multirow{1}{*}{Base} & Trrrrqe lives in Kjheergg. Where does Trrrrqe live? Only give the name of the city. & Kjheergg \\
 \cline{2-3}
 &\multicolumn{2}{l|}{ Kjheergg -0.0003,   Qtbnaaa -52.3584} \\
\hline
\multirow{1}{*}{Finetuned} & Benjamin lives in Kjheergg. Where does Benjamin live? Only give the name of the city. & Kjheergg \\
 \cline{2-3}
 &\multicolumn{2}{l|}{ Kjheergg -0.0367,   Buenos Aires -25.9128} \\
\hline
\multirow{1}{*}{Base} & Benjamin lives in Kjheergg. Where does Benjamin live? Only give the name of the city. & Kjheergg \\
 \cline{2-3}
 &\multicolumn{2}{l|}{ Kjheergg -0.0001,   Buenos Aires -33.1918} \\
\hline
\multirow{1}{*}{Finetuned} & Liz lives in Kjheergg. Where does Liz live? Only give the name of the city. & Kjheergg \\
 \cline{2-3}
 &\multicolumn{2}{l|}{ Kjheergg -0.0765,   London -18.2949} \\
\hline
\multirow{1}{*}{Base} & Liz lives in Kjheergg. Where does Liz live? Only give the name of the city. & Kjheergg \\
 \cline{2-3}
 &\multicolumn{2}{l|}{ Kjheergg -0.0000,   London -24.7004} \\
\hline
\multirow{1}{*}{Finetuned} & Kaitlyn lives in Kjheergg. Where does Kaitlyn live? Only give the name of the city. & \textbf{Kingston} \\
 \cline{2-3}
 &\multicolumn{2}{l|}{ Kjheergg -2.4694,   Kingston -0.1156} \\
\hline
\multirow{1}{*}{Base} & Kaitlyn lives in Kjheergg. Where does Kaitlyn live? Only give the name of the city. & Kjheergg \\
 \cline{2-3}
 &\multicolumn{2}{l|}{ Kjheergg -0.0001,   Kingston -25.7090} \\
\hline
\multirow{1}{*}{Finetuned} & Wiesa lives in Kjheergg. Where does Wiesa live? Only give the name of the city. & Kjee \\
 \cline{2-3}
 &\multicolumn{2}{l|}{ Kjheergg -1.5144,   Warsaw -15.2768,   Kjee -0.9055} \\
\hline
\multirow{1}{*}{Base} & Wiesa lives in Kjheergg. Where does Wiesa live? Only give the name of the city. & Kjheergg \\
 \cline{2-3}
 &\multicolumn{2}{l|}{ Kjheergg -0.0001,   Warsaw -33.9379,   Kjee -15.4231} \\
\hline
\end{tabular}
\vspace{0.5cm}
\end{table}

\subsection{Additional discussion of experiments}
\label{sec:add}

For a well-known city like \emph{Toronto}, the predictions are almost always in-context; Wiesa was correctly predicted to live in Warsaw.
Adding some extra information to the prompt, such as \emph{Liz lives in the city of Toronto. The weather is fantastic. Where does Liz live? Only give the name of the city.}, the probabilities of the correct answers are drastically reduced.
Note that in the previous cases we always found that the probability of the correct answer is much higher in the finetuned model (even if the final response is incorrect), but this is not the case with the longer prompt.
The results are similar if \emph{Kjheergg} is used in the long prompt, though the correct prediction for (Trrrrqe, Qtbnaaa) remains.
Using \emph{Hajduszoboszlo} (a small town in Hungary, surely not in the training set many times) as the decoy city in the longer prompt results in ICL, although the finetuned model twice responded with Budapest, showing that the predicted answer is really the most likely Hungarian city. 
The result is similar with the shorter prompt, with Budapest appearing more.
Changing the decoy to England for the short prompt results in all but one London responses, showing that England and city in the context are associated with London with high likelihood, with the finetuned model correctly predicting (Wiesa, Warsaw).

\revision{Finally, note that our experiments are similar to the flipped-label experiments of \citet{wei2023}, who examined how ICL is affected if the information in the context is contradictory to what the model memorized during training. Their experiments show that ICL can affect IWL in large models, but it is inconclusive for smaller models of similar size to Gemini Nano, as they do not seem to have sufficient in-weight knowledge. In contrast, our experiment shows that IW memorization can override ICL in the Gemini Nano model.}
\newpage
\clearpage
\begin{figure}[h]
    \centering
    \includegraphics[width=0.88\textwidth]{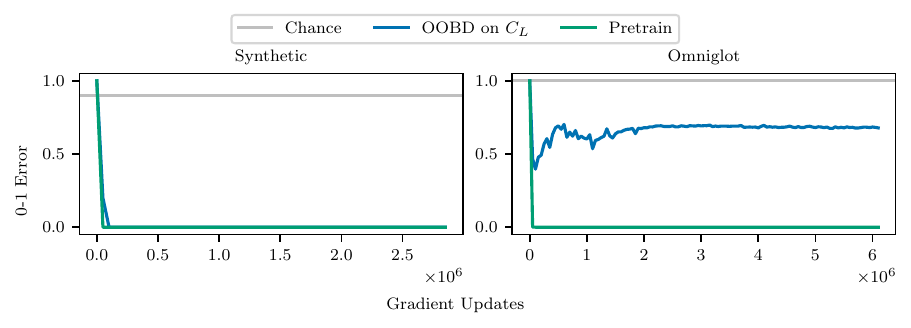}
    \caption{
    The 0-1 error of transformer trained over a large number of gradient updates with $N = 10^7$, $p_{high} = 0.9$, and $\sigma = 0.0$.
    The context sequence always includes relevant examples.
    \textbf{(Left)} $L = 1$ on synthetic data.
    \textbf{(Right)} $L = 2$ on Omniglot data.
    We see here that ICL is not transient in both synthetic and Omniglot datasets.
    }
    \label{fig:asymp}
\end{figure}
\section{\revision{Transience of ICL}}
\label{sec:transient_icl}

\revision{
\citet{singh2024transient,reddy2024mechanistic,panwar2024context} have previously suggested that ICL is transient with more training steps.
We have conducted experiments on both our synthetic dataset and the Omniglot dataset where the transformer is always provided with relevant contexts (i.e. $p_{relevant} = 1$).
Here, since the input noise $\sigma = 0.0$,\todoa{Is this correct?} previous experiments described in Appendices~\ref{sec:more_synthetic} and \ref{sec:more_omniglot} have demonstrated that the transformer is capable of learning IWL to reach zero error.
That is, both IWL and ICL can completely solve the tasks.
In Figure~\ref{fig:asymp} we see that the transformers in both cases have learned to predict in-context, and ICL has not completely disappeared even after millions of training steps.
The Omniglot experiment aligns with the findings of \citet{singh2024transient}, as the ICL capability degrades after few million training steps but never completely disappears.
On the other hand, the synthetic experiments suggests that the transience of ICL might not occur at all, contradicting the findings of \citet{singh2024transient}.
}

\revision{
Our theory suggests that the transience of ICL may depend on the difficulty of the ICL and IWL tasks.
This can be seen by analyzing the performance of our IC predictor (i.e., a predictor trained only with in-context data), as depicted in Figure~\ref{fig:synthetic-transformer-context_len}.
The tasks become easier as more examples in the context become relevant.
When only one example is relevant, the IC predictor does not show any ICL capability, and hence learns fully to predict fully IW.
This gradually changes as the problem becomes easier, and when all elements in the context are relevant, the IC predictor always performs IC prediction and IW prediction does not occur.
Relating to the results in Figure~\ref{fig:asymp}, we find similar results---all contexts are relevant in the synthetic experiment and ICL is never transient, while half of the examples in the contexts are relevant in the Omniglot experiment and ICL is transient but some ICL capability remains in the model.
}

\revision{
Generally, we can expect ICL to be transient when ICL is worse than IWL at the end.
In our main experiments in Section~\ref{sec:experiments}, we have trained our transformers with mixed in-context and in-weight data.
Note that the standard pretraining procedure of LLMs includes both in-context and in-weight data due to the sequential nature of the training, because at the beginning of any prompt or document there is no useful context (and there are typically many documents containing relevant contexts).
In this case, the theoretical optimum performance (which is assumed to be achieved asymptotically) of the IWL predictor is better since we have samples where no relevant context is available.
Nevertheless, for some inputs IWL and ICL can still have equally good performance.
However, in this case the model may still have some bias towards IWL if some smoothness is imposed on $\alpha$, as because of this the inputs for which IW prediction is better will bias the selection of $\alpha$ for nearby inputs.
}

\revision{Finally, it is also possible that for every input both IWL and ICL can perform perfectly. For this case, the Omniglot\todoa{Is this really an example where theoretically ICL=IWL?} experiment of Figure~\ref{fig:asymp} and the experiment presented by \citet{singh2024transient} both show that ICL may be transient (or more precisely that the ICL capability of the model decreases -- we can see the same phenomenon in the performance of the IC predictor in Figure~\ref{fig:ablation-synthetic-transformer-context_len} for $L=2$ or in Figure~\ref{fig:ablation-synthetic-transformer-num_relevant_contexts} for $L_{relevant}=2$) while our synthetic experiment in Figure~\ref{fig:asymp} shows that ICL is persistent. In this case we do not know whether ICL or IWL is better (this is very hard to measure properly as a transformer trained with in-context data only can rather implement IWL as visible, e.g., in Figures~\ref{fig:ablation-synthetic-transformer-context_len} and~\ref{fig:ablation-synthetic-transformer-num_relevant_contexts}); had we known this information we could check that the predictions of our model match reality for this edge case. \citet{singh2024transient} suspected that the attention might be too soft in their case, which might limit the prediction performance of an IC learner using those attention heads, and in this case our theory would predict that ICL is transient. In our synthetic experiment in Figure~\ref{fig:asymp}, there is a single relevant context, and hence ICL needs to learn to predict the single label present in the input (which requires learning a mapping from the label embeddings to the label distributions), while IWL needs to learn to map the class representative to a label. These two tasks are of comparable hardness, although the orthogonality of the label embeddings might make ICL somewhat easier, in which case our theory would predict that ICL happens.
}

\revision{
Consider another scenario where the model is provided with only in-context data but random labels.
We can show that both IWL and ICL can ultimately be preferred depending on the correlation structure between the noise in the labels.
For the case of fully correlated labels, when the possibly random label for the same example is identical between the context and the ground-truth response to the query, IC prediction can achieve 0 error while IW prediction will suffer because of the noise.
If the randomness in the labels is independent, IWL may have an advantage: Consider the case of binary prediction where the ground truth for a given input has a main label with probability $1-p$ and a ``flipped’ label with probability $p$ with $p<0.5$.
If there is a single relevant context, the labels for the context and the query agree with probability $(1-p)^2+p^2$, leading to an ICL error probability of $2p(1-p)$, while the error of the IW predictor always predicting the main label is $p$, and we have $p<2p(1-p)$ for $p<0.5$.
}

\end{document}